\def\eqref#1{equation~\ref{#1}}
\def\1{\bm{1}}
\DeclareMathAlphabet{\mathsfit}{\encodingdefault}{\sfdefault}{m}{sl}
\SetMathAlphabet{\mathsfit}{bold}{\encodingdefault}{\sfdefault}{bx}{n}
\newtheorem{definition}{Definition}
\newtheorem{theorem}{Theorem}
\title{Effects of Distributional Biases on Gradient-Based Causal Discovery in the Bivariate Categorical Case}
\author{\name Tim Schwabe\thanks{Equal contribution} \email tim.schwabe@tum.de \\
      \addr TUM, Munich, Germany
      \AND
      \name Moritz Lange\footnotemark[1] \email moritz.lange@rub.de \\
      \addr Institute for Neural Computation, Faculty for Computer Science,\\Ruhr University Bochum, Bochum, Germany
      \AND
      \name Laurenz Wiskott \email laurenz.wiskott@rub.de \\
      \addr Institute for Neural Computation, Faculty for Computer Science,\\Ruhr University Bochum, Bochum, Germany
      \AND
      \name Maribel Acosta \email maribel.acosta@tum.de \\
      \addr TUM, Munich, Germany}
\definecolor{c1}{HTML}{440154} 
\definecolor{c2}{HTML}{21918C} 
\definecolor{c3}{HTML}{FDE725} 
\definecolor{c4}{HTML}{F85F36} 
\begin{document}

\maketitle

\begin{abstract}
Gradient-based causal discovery shows great potential for deducing causal structure from data in an efficient and scalable way. Those approaches however can be susceptible to distributional biases in the data they are trained on. We identify two such biases: \textit{Marginal Distribution
Asymmetry}, where differences in entropy skew causal learning toward certain factorizations, and \textit{Marginal Distribution Shift Asymmetry}, where repeated interventions cause faster shifts in some variables than in others.
For the bivariate categorical setup with Dirichlet priors, we illustrate how these biases can occur even in controlled synthetic data. To examine their impact on gradient-based methods, we employ two simple models that derive causal factorizations by learning marginal or conditional data distributions -- a common strategy in gradient-based causal discovery. We demonstrate how these models can be susceptible to both biases. We additionally show how the biases can be controlled. An empirical evaluation of two related, existing approaches indicates that eliminating competition between possible causal factorizations can make models robust to the presented biases.
\end{abstract}

\section{Introduction}
\label{sec:introduction}
Causal discovery is an increasingly popular research field since causal understanding enables more informed decision-making and potentially better generalization to new settings and data distributions. Conditional distributions of data can be expressed by Bayesian networks in the form of
\begin{equation}
    P(X_1, \dots, X_n) = \prod_{i=1}^n P(X_i | P\kern-1pt a_{X_i}) \, .
    \label{eq:scm}
\end{equation}
Eq.~\ref{eq:scm} expresses that each variable $X_i$ depends on a set of parents $P\kern-1pt a_{X_i}$ (this might also be the empty set) and, conditioned on these parents, is independent of other variables. This is called a factorization and can be represented as a directed acyclic graph (DAG), which has variables as vertices, while edges express dependence relations. When the edges express causal dependence (meaning that the state of $X_i$ is caused by the states of parents $P\kern-1pt a_{X_i}$), this is called a structural causal model (SCM) \citep{pearl_2009, scholkopf2021toward}.

The simplest yet meaningful causal setup to investigate is that of only two, fully observed, variables. In this case, if a causal relationship is assumed, data can either be generated through a process $P(X_1, X_2) = P(X_1)P(X_2|X_1)$, or $P(X_1, X_2) = P(X_2)P(X_1|X_2)$. In the first case, $X_1$ is the independent variable upon which $X_2$ depends, and in the second case, it is the other way around. As an example, consider the relationship between weather and outfit choice: Sunny and warm weather will see people dress differently than rainy or cold weather. From observational data alone, the causal relationship of the two variables cannot generally be learned. Interventions are required to determine a causal effect \citep{pearl_2009}. In our example, a change in weather will see people change their outfits, while a change in outfits will unfortunately not bring the sun out.

But how to derive these causal relations from data? Recent works improve upon discrete score-based methods by using gradient-based approaches with differentiable score functions and models. These models are commonly based on three components: (i) Neural networks with a capacity to learn distributions that relate variables to their parents, (ii) a differentiable representation of the graph structure between the variables, and (iii) a regularization or constraint that enforces that the factorization represented by these distributions is a valid DAG. These methods are trained on observational or interventional data, with known or unknown interventions. They reconstruct input data based on conditional distributions, given what the model believes are causal parents, and optimize a certain score function -- commonly a maximum likelihood loss with additional regularizations enforcing sparseness and DAGness \citep{bengio2019meta, brouillard2020differentiable, lippe2022efficient, ke2023neural}.

In this context, the regularization means that models have to decide between possible parents for variables under the constraint of achieving a DAG structure. The distributions and factorizations to which the models converge, based on their loss, are clearly influenced by differences and changes (due to interventions) in the data distributions. As we show in this paper, such influences lead to two types of biases in joint and marginal distributions: \textit{Bias~1: Marginal Distribution Asymmetry} and \textit{Bias~2: Marginal Distribution Shift Asymmetry}.

We define Bias~1 as the difference in entropy between marginal distributions and show how it can be controlled, in the bivariate categorical setup, by parameterized deviations from a Bayesian Dirichlet equivalence (BDe) prior. Likewise, we define Bias~2 as the difference of Kullback-Leibler Divergences between distributions before and after interventions. We show how it can be controlled by adjusting the relative frequency of interventions performed on the individual variables. These biases have the potential to significantly aid or hinder correct learning of causal structure depending on the choice of model and training paradigm. To the best of our knowledge, these biases -- in particular for the setup of continuing interventions -- have not been investigated in the literature before.

In this work, we empirically show and discuss the influence of both biases on recovering causal structure using two different simple models. We choose the common bivariate causal discovery problem with categorical data; a simple setting that serves as a clear and illustrative example. For the same reason, it is also a common distributional choice in related works, e.g. \citet{bengio2019meta}. Specifically, our \textbf{contributions} are:

\begin{enumerate}
    \item We present two distributional biases that can affect convergence in gradient-based causal discovery methods: \textit{Bias 1: Marginal Distribution Asymmetry} and \textit{Bias 2: Marginal Distribution Shift Asymmetry}. We prove the existence of \textit{Bias 2} for the scenario where interventions are performed exclusively on the causal (parent) variable.
    \item We demonstrate how Bias 1 can be controlled through deviations from a BDe prior in the bivariate categorical case, and how Bias 2 can be controlled by adjusting the relative frequencies of interventions.

    \item We propose two simple models to empirically examine how these distributional biases affect gradient-based learning of marginal or conditional distributions, and consequently, causal factorizations. Using these models, we demonstrate the practical impact of both biases in a controlled bivariate categorical setting.
    \item We additionally evaluate two existing causal discovery approaches for the bivariate categorical setup \citep{bengio2019meta,lippe2022efficient}, and find that the models with direct competition between possible factorizations are susceptible to the biases, while the model without this competition is not.

\end{enumerate}

Overall, our study demonstrates how distributional biases in data -- particularly in artificially generated data -- can influence gradient-based causal discovery, highlighting the need to consider such biases in causal inference methods.

\section{Related work}
\label{sec:related}
Discovering the graph structure of Bayesian Networks from samples of their joint distribution is called structure learning. If the network represents cause-effect relationships between the variables, it is called causal discovery. Approaches to this problem are either constraint-based or score-based.

Constraint-based methods try to recover the true causal graph by exploiting conditional independence between the variables \citep{DBLP:conf/uai/Monti0H19, DBLP:books/daglib/0023012, DBLP:conf/nips/KocaogluJSB19, DBLP:conf/nips/JaberKSB20, DBLP:conf/icml/SunJSF07, DBLP:conf/uai/HyttinenEJ14, b93b3b9a-afac-392f-891b-8fdfe859771d, HeinzeDemlPetersMeinshausen+2018}. That is, they perform exhaustive statistical tests of samples from subsets of variables. Identified independencies are used to remove causal edges between variables to converge to an estimate of the causal structure. The downside of constraint-based methods is their computational complexity and poor scaling with respect to the number of variables.

Score-based methods, on the other hand, optimize for a certain score (e.g. Bayesian Information Criterion) to recover the causal structure. Those scores represent how well the discovered structure can model the data, and incorporate further constraints (i.e. DAGness) and penalties (e.g. for the total number of edges in the found graph). Traditionally, score-based methods search discretely through the space of possible graph structures. Given the super-exponential number of such graphs, heuristic search methods need to be applied \citep{maxminhill, Meek2023, hauser113, perm123, DBLP:conf/icml/YangKU18}.

To eliminate the need for combinatorial search through graph structures in score-based methods, more recently, continuous methods for causal discovery and structure learning have been proposed. Those approaches use differentiable score functions as well as differentiable causal models, where the functions predicting the causal variables given their parents are often modeled using neural networks and a continuous relaxation of an adjacency matrix representing the causal relationships between variables. Due to the differentiability of the system, modern gradient-based methods can be applied. In this context, \citet{notears} introduce a smooth constraint in learning a linear SCM that enforces the learned graph to be a DAG, an approach which has been extended to the nonlinear case~\citep{daggnn, grandag}. \citet{wren2022learning} aim to learn the true graph using discrete backpropagation and the same regularization as \citet{notears} to enforce DAG-ness. \citet{dagrl} uses Reinforcement Learning to find the correct causal graph structure. However, these methods only consider observational data, and hence can only recover graph structures in the same Markov equivalence class as the true causal graph.

Several other approaches do take interventions into account. \citet{bengio2019meta} propose to learn the causal direction of two random variables, where one causes the other, by a meta-learning approach. First, they independently learn models for all possible causal factorizations on a single observational distribution, which is followed by joint learning of those model parameters alongside structural variables that encode the causal direction on several interventional distributions. The authors show both empirically and analytically that the structural parameters converge to the correct causal direction since this model requires updating fewer parameters to adapt to a new interventional distribution.

The work of~\citet{bengio2019meta} is extended to more than two causal variables by SDI~\citep{ke2023neural}. In this approach, the conditional and prior distributions are similarly modeled by MLP parameters $\theta$, while the causal structure is modeled by structural parameters $\gamma$. The model is trained in an alternating scheme, where the functional parameters are fitted using the observational distribution, while the structural parameters are fitted using interventional distributions. Each variable is represented by a single MLP, enabling the approach to scale linearly with respect to the number of causal variables. As an enhancement to SDI, \citet{lippe2022efficient} propose ENCO, which uses a new gradient estimator for the structural parameters and splits them into parameters for edge existence and edge direction. A related approach for continuous distributions has also been proposed~\citep{brouillard2020differentiable}.
\citet{ke2020amortized}~propose a model that is trained on many different causal graphs and can predict new causal graphs given interventional samples, where the intervention target is known. Their approach uses an attention mechanism to decide which variable is used to which extent to predict another. 
\citet{masked_causal_structure} learn a binary adjacency matrix representing the causal structure. They use a Gumbel-Sigmoid to threshold the values of the adjacency matrix and approximate a discrete distribution. However, they do not consider interventions but learn solely from observational data in cases where the true causal graph is identifiable up to a super-graph. For a comprehensive summarization of continuous optimization methods for causal discovery, we refer to the review by \citet{djalllikedags}.

\citet{beware} highlight how varsortability -- the tendency of marginal variances to increase along causal directions in certain linear additive noise models -- can inadvertently leak information about causal structure. They demonstrate that typical random parameter choices in these models yield ``varsortable'' data that artificially boosts performance for structure learning methods, even in purely observational scenarios. Likewise, in our bivariate categorical setup, we also observe that imbalanced Dirichlet priors introduce asymmetric observational distributions, which steer gradient-based algorithms toward one causal direction without needing explicit causal constraints. Unlike their work, we investigate the interventional setting.

\section{Biases}
We identify two biases for the fully observable bivariate causal setup. In this section, we define both biases before we empirically validate them in subsequent sections on a testbed of two categorical variables with Dirichlet priors.

\subsection{Marginal Distribution Asymmetry}
\label{sec:bias1}
While it is, in principle, impossible to determine causality from observational data alone \citep{pearl_2009}, a causal relationship between variables often results in asymmetries between marginal distributions $P(X_1)$ and $P(X_2)$ even in the observational case, before an intervention has taken place~\citep{mooij2016distinguishing}. This leads to a bias we call \textit{Bias 1: Marginal Distribution Asymmetry}. We define it as follows:

\begin{definition}[Bias 1: Marginal Distribution Asymmetry] \label{dfn:bias_1}
We define the difference in entropy between random variables $X_i$, $X_j$ as
\[
\Delta H_{i,j}\;=\;H(X_i)\;-\;H(X_j) \, .
\]
We say there is a \emph{marginal distribution asymmetry} when $\Delta H_{i,j} \neq 0$.
\end{definition}
An asymmetry in distributions can lead to an asymmetry in the learning signals to a gradient-based causal discovery method. In the bivariate categorical case we find that the spikier distribution with $\Delta H_{i,j} < 0$ is easier to learn.
Distribution asymmetry can be controlled for by a suitable choice of conditional $P(X_j|X_i)$, at least in a synthetic setup. In the categorical case with Dirichlet prior, we present the Bayesian Dirichlet equivalence prior (see Section \ref{sec:data_generation}) as an example. For clarity, we provide the definition for (conditional) entropy in Section \ref{appendix:definitions} of the Appendix.

\subsection{Marginal Distribution Shift Asymmetry}
\label{sec:bias2}

An intervention in a causal setup is commonly applied by fixing the distribution of a random variable (to a specific value for hard interventions or a distribution over values for soft interventions~\cite{pearl_2009}) and observing how this affects marginal distributions of other variables. Such interventions lead to shifts in distributions, which generally happen at different speeds for different causally related variables. Unrelated variables do not change at all (see ICM principle in \citet{scholkopf2021toward}). We call the resulting bias \mbox{\textit{Bias 2: Marginal Distribution Shift Asymmetry}} and define it as follows:

\begin{definition}[Bias 2: Marginal Distribution Shift Asymmetry]

Let $X_i \sim P_i, X_j \sim P_j$ be two random variables, and $X'_i \sim P'_i, X'_j \sim P'_j$ the same random variables after an intervention. Then, we define the distribution shift\footnote{An alternative formulation based on Cross Entropy is given in Appendix \ref{appendix:shift}} of variable $X_i$ with the Kullback-Leibler divergence as
\[
S_i = D_{KL}(P_i'||P_i) \, .
\]
The difference in distribution shift between variables $X_i$ and $X_j$ is then
\[
\Delta S_{i,j}\;=\;S_i\;-\; S_j\, .
\]
We say there is a \emph{marginal distribution shift asymmetry} when $\Delta S_{i,j} \neq 0$.
\end{definition}

The definition of (conditional) Kullback-Leibler divergence is likewise provided in Section \ref{appendix:definitions} of the Appendix.

In a bivariate setup with factorization $X_1 \rightarrow X_2$ (i.e. $X_1$ causes $X_2$) there are four intervention cases to differentiate. The reason is that interventions momentarily change dependencies when fixing variables (e.g. making $X_2$ independent of $X_1$), and $\Delta S_{i,j}$ depends on causal dependencies before and after an intervention. We visualize these cases in Figure \ref{fig:intervention_cases} and outline their associated distribution shifts in Table \ref{tab:intervention_cases} for marginal as well as conditional distributions.
Appendix Section \ref{appendix:intervention_cases} contains an empirical verification on bivariate categorical data.
We assume continuous interventions, where a new intervention is applied immediately upon undoing a previous intervention. Some other works undo interventions by restoring the original distributions first. This, however, is a special case of the more general continuous interventions.

\begin{figure*}[h!]
    \centering
    \includegraphics[width=\linewidth]{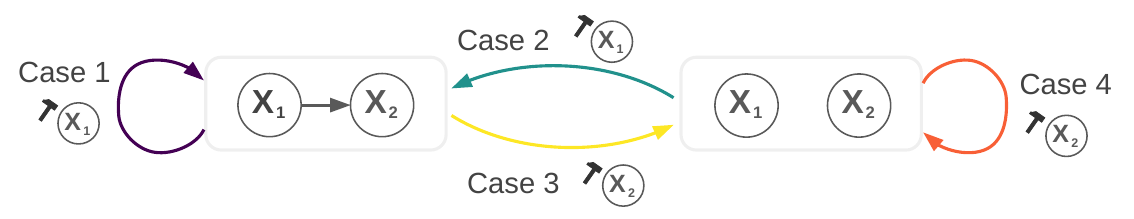}
    \caption{Bivariate intervention cases for causal factorization $X_1 \rightarrow X_2$, color-coded. An intervention, signified by the little hammer, assigns a new independent distribution to a variable. The system can therefore have two states: The default causally related one (left box), and a causally independent one (right box), which is obtained by intervening on $X_2$. In case 2, the underlying conditional $P(X_2|X_1)$ is restored.}
    \label{fig:intervention_cases}
\end{figure*}

\begin{table}[h]
\centering
\caption{Different intervention cases and their effect on marginals and conditionals. The \textbf{M} row describes behavior of marginal distributions, the \textbf{C} row that of conditionals. Causal relationships in the assumed causal setup $X_1 \rightarrow X_2$ might not be present, if $X_2$ was fixed through an intervention. Corresponding empirical data of distribution shifts, for the bivariate categorical setup, is provided in Section \ref{appendix:intervention_cases} of the Appendix.}
\begin{tabular}{p{0.3cm} p{3.56cm} p{3.5cm} p{3.5cm} p{3.5cm}}
\toprule
 & 
\textbf{Case 1} \!\!\scalebox{2}{\textcolor{c1}{\textbullet}} & 
\textbf{Case 2} \!\!\scalebox{2}{\textcolor{c2}{\textbullet}} & 
\textbf{Case 3} \!\!\scalebox{2}{\textcolor{c3}{\textbullet}} & 
\textbf{Case 4} \!\!\scalebox{2}{\textcolor{c4}{\textbullet}} \\
\midrule
\textbf{M} & 
$P_1$ changes arbitrarily.\newline$P_2$ can change at most as much as $P_1$ (see Theorem \ref{theorem:bias2} in Appendix). &
$P_1$ changes arbitrarily.\newline$P_2$ also changes arbitrarily as it was arbitrary before intervention. &
$P_1$ remains unchanged as it is independent.\newline$P_2$ changes arbitrarily. &
$P_1$ remains unchanged as it is independent.\newline$P_2$ changes arbitrarily. \\
\addlinespace
\textbf{C} & 
$P_{2|1}$ stays constant, $P_{1|2}$ changes arbitrarily. &  $P_{2|1}$ changes arbitrarily, as does $P_{1|2}$. & $P_{2|1}$ always changes more than $P_{1|2}$. & $P_{2|1}$ changes arbitrarily. $P_{1|2}$ stays constant.
\\
\bottomrule
\end{tabular}
\label{tab:intervention_cases}
\end{table}

It follows that the order of interventions matters. In practice, it is often desirable to intervene in a random order to avoid exacerbating the distribution shift bias.

A bias in the distribution shifts under intervention generally results in an asymmetric learning signal, because the loss landscape changes more rapidly for the variable with the larger distribution shift.
The distribution shift bias can be controlled by empirically determining an appropriate ratio of interventions on $X_1$ compared to $X_2$. To a limited extent, it can also be controlled by choosing a specific order of interventions to make either case 1 or case 2 more frequent.

\section{Methods}
\label{sec:methods}

We investigate the problem of continuous score-based causal discovery from a similar perspective as recent related work~\citep{bengio2019meta, ke2023neural}. That is, we assume access to samples of discrete random variables $X_1$, $X_2$ that are causally related without any further confounders, and try to recover the causal direction between them. We repeatedly intervene on the distributions of these variables ($I$ times), giving rise to a set of different, related distributions $S = \{P_i(X_1, X_2)\}_{i=1}^I$, called the interventional set.

We assume a differentiable score function, which quantifies how well a given (differentiable) model can predict samples from the distributions in $S$. The model can then be trained using gradient-based learning to optimize the given score function. The model is constructed in such a way that optimizing the score leads to a given factorization of $P(X_1, X_2)$ that aims to reflect the true causal structure.

\subsection{Bivariate Categorical Setup}
\label{sec:data_generation}
For our investigation, we consider categorical distributions, which are themselves sampled from a Dirichlet prior. The Dirichlet prior defines a distribution over probabilities for different categories. It is the conjugate prior to categorical distributions and a natural choice, not only because it always returns normalized probabilities, but also because it is versatile given its parameters.

\paragraph{Categorical distributions with Dirichlet prior}
We consider categorical random variables $X_1$, $X_2$, with $K$ categories each, respectively, defined by probability vectors $\boldsymbol{\pi}_1~\sim~\mathrm{Dirichlet}(K, \boldsymbol{\alpha}_1)$, $\boldsymbol{\pi}_{2}^{X_1=x_1}~\sim~\mathrm{Dirichlet}(K, \boldsymbol{\alpha}_2)$, which are random variables themselves. Note that the different probability distributions $\boldsymbol{\pi}_{2}^{X_1=x_1}$ are sampled independently for each value of $X_1$ but from the same Dirichlet distribution. The distributions of $X_1$ and $X_2$ are then given as
\begin{align}
    P(X_1) &= \mathrm{Categorical}(\boldsymbol{\pi}_1) \, ,\\
    P(X_2|X_1=x_1) &= \mathrm{Categorical}(\boldsymbol{\pi}_{2}^{X_1=x_1}) \, .
\end{align}
That is, we use Dirichlet priors to generate categorical distributions for both variables. The Dirichlet distributions, in turn, are specified by parameter vectors $\boldsymbol{\alpha}_1$ and $\boldsymbol{\alpha}_2$ that determine their shape. We set these parameters to
\begin{align}
    \boldsymbol{\alpha}_1 = \mathbf{1}_K\\
    \boldsymbol{\alpha}_2 = \frac{1}{\varepsilon K} \ \mathbf{1}_K \label{eq:alpha2}
\end{align}
where $\mathbf{1}_K$ is a $K$-dimensional vector of ones and $\varepsilon$ a parameter to control Bias 1.

Samples \( x_1, x_2 \) are generated using either ancestral sampling or independent sampling, depending on the interventional context. Under ancestral sampling, a value \( x_1 \) is first drawn from the marginal distribution \( P(X_1) \), after which a subsequent value \( x_2 \) is sampled from the conditional distribution \( P(X_2 \mid X_1 = x_1) \). Alternatively, after intervention cases 2 or 4, \( x_1 \) and \( x_2 \) become momentarily independent and both variables are sampled independently from respective marginal distributions.

\paragraph{The BDe prior}

To achieve symmetry of marginal distributions in our generated synthetic data, at least for an observational case, we borrow the Bayesian Dirichlet equivalence (BDe) prior from the classical literature of structure learning \citep[Section 18.3.6.3]{Koller_Friedman_2009}. This allows us to control the extent of \textit{Bias~1: Marginal Distribution Asymmetry} and investigate its effects in isolation. It requires that the values of all $\boldsymbol{\alpha}$ vectors of the various Dirichlets given different possible states $x$ of $P\kern-1pt a_{X_i}$ add up to the same total value $\sum_{x} \boldsymbol{\alpha}_i^{P\kern-1pt a_{X_i=x}}$, for all variables $X_i$. This sum is called the equivalent sample size, and the prior ensures that all Dirichlet distributions have the same equivalent sample size. It is used in structure learning as an initial unbiased assumption on the distribution of Dirichlet-distributed parameters; in fact, the only one that ensures equivalent scores for possible factorizations \citep[Theorem 18.4]{Koller_Friedman_2009}.

To obtain unbiased distributions, we make use of this same principle for data generation. In our case, a BDe prior is ensured by the factor $1/K$ in Eq. (\ref{eq:alpha2}), because $X_1$, the parent of $X_2$, can take $K$ possible values. The factor $\varepsilon$ is used to deviate from the BDe equivalence scenario in a controlled fashion and investigate its effect. The BDe prior is obtained if $\varepsilon=1$. If $\varepsilon>1$, $X_1$ has, on average, a higher entropy, or $P(X_1)$ is closer to a uniform distribution, than $X_2$. If $\varepsilon<1$, the opposite is true.

Using the marginalized probability $P(X_2)\;=\;\sum_{x_1} P(X_2|X_1=x_1)P(X_1=x_1)$ and the difference in entropy $\Delta H_{1,2}\;=\;H(X_1) - H(X_2)$ from Definition~\ref{dfn:bias_1}, this means that for our particular setup:
\[
\mathbb{E}[\Delta H_{1,2}] \;=\;
\begin{cases}
>0, & \text{if } \varepsilon > 1,\\[4pt]
0, & \text{if } \varepsilon = 1,\\[4pt]
<0, & \text{if } \varepsilon < 1.
\end{cases}
\]

In other words, $\varepsilon=1$ yields symmetric marginals (in terms of entropy) on average (no bias), 
while $\varepsilon>1$ or $\varepsilon<1$ biases the entropies of $X_1$ vs.\ $X_2$.
Figure \ref{fig:dist_shift_lambda} in Section \ref{sec:results} confirms this empirically.

\paragraph{Interventions}
Similar to \citet{bengio2019meta}, we consider soft interventions on the causing variable $X_1$. We also consider soft interventions on the dependent variable $X_2$, and a mix of interventions on both variables at varying rates.

An intervention on $X_1$ means sampling a new $\boldsymbol{\pi}_1$ for $P(X_1)$, while leaving the conditional $P(X_2|X_1)$ unchanged. The marginal $P(X_2)$ thus also changes. An intervention on $X_2$ means that $X_2$ is fixed to a new distribution and made independent of $X_1$. In practice, we sample a $\boldsymbol{\pi}_2$ from prior $\mathrm{Dirichlet}(K, \boldsymbol{\alpha}_1)$ using the same prior as for $P(X_1)$. $P(X_1)$ remains unchanged. A later intervention on $P(X_1)$ will release the fix of $X_2$ and restore the conditional $P(X_2|X_1)$, which remains unchanged.

To control the \textit{Bias 2: Marginal Distribution Shift Asymmetry}, we vary the ratio at which we intervene on $X_1$ and $X_2$. For this, we introduce a parameter $\lambda \in [0, 1]$ as the proportion of interventions carried out on variable $X_2$. That means when $\lambda = 0$, we only intervene on $X_1$, and when $\lambda = 1$ we only intervene on $X_2$. The fraction of interventions on $X_2$ increases linearly as $\lambda$ moves from 0 to 1. Interventions are always carried out in random order. Figure \ref{fig:entropy_epsilon} in Section \ref{sec:results} shows the empirical relationship between $\lambda$ and $\Delta S_{1,2}$ for this setup. Figures \ref{fig:lambda_counts}, \ref{fig:scatter_intervention_cases} and \ref{fig:scatter_intervention_cases_conditional} in Appendix section \ref{appendix:intervention_cases} present a further analysis of how different intervention cases contribute to $\Delta S_{1,2}$ for different $\lambda$.

\subsection{Models}
\label{setup}
The goal of determining the correct causal direction between $X_1$ and $X_2$ is achieved by the surrogate goal of learning the distributions (conditional or marginal) of $X_1$, $X_2$ from samples across interventional distributions.

We employ two illustrative models: A \textit{marginal model (MM)} that can learn either marginal distribution $P(X_1)$ or $P(X_2)$, and a conditional model (CM) that can learn either conditional distribution $P(X_1|X_2)$ or $P(X_2|X_1)$. Both models are depicted in Figure \ref{fig:model}. They have structural parameters $c_1$ and $c_2$, which are weights that have to sum to one. These structural parameters encode how much the model prefers learning one distribution over the other.

Both models serve the purpose to investigate how \textit{Bias~1: Marginal Distribution Asymmetry} and \textit{Bias~2: Marginal Distribution Shift Asymmetry} affect convergence towards~one (marginal or conditional) distribution or the other learnable, i.e., one causal factorization or the other. In other words: Which factorization is easier to learn under each bias?

\begin{description}
    \item[\small{Marginal Model (MM)}] \hfill \\
    \par The marginal model only has one vector $\mathbf{i} \in \mathbb{R}^K$ of learnable parameters:
\begin{align}
    \hat{\mathbf{x}}_1 &= c_2 \mathbf{i}\\
    \hat{\mathbf{x}}_2 &= c_1 \mathbf{i} 
\end{align}
    This model does \textit{not} consider any input. However, it still learns through its loss and is at most able to learn the marginal distribution of either $X_1$ or $X_2$. The order of factors $c_i$ is chosen for consistency with CM, where a large $c_i$ corresponds to $X_i$ being the independent variable.

    \item[\small{Conditional Model (CM)}] \hfill \\
    \par The model has a learnable matrix $\mathbf{W} \in \mathbb{R}^{K\times K}$ that is multiplied with inputs:
    \begin{align}
    \hat{\mathbf{x}}_1 &= c_2 \mathbf{W} \ \mathbf{e}_{x_2}\\
    \hat{\mathbf{x}}_2 &= c_1 \mathbf{W} \ \mathbf{e}_{x_1}
\end{align}
    The CM model is thus able to condition outputs on inputs, represented as one-hot encoded representations $\mathbf{e}_{x_i}$ of data values $x_i$. It is therefore able to learn a conditional distribution in $\mathbf{W}$.
\end{description}

\begin{figure*}[h!]
    \centering
    \includegraphics[width=\linewidth]{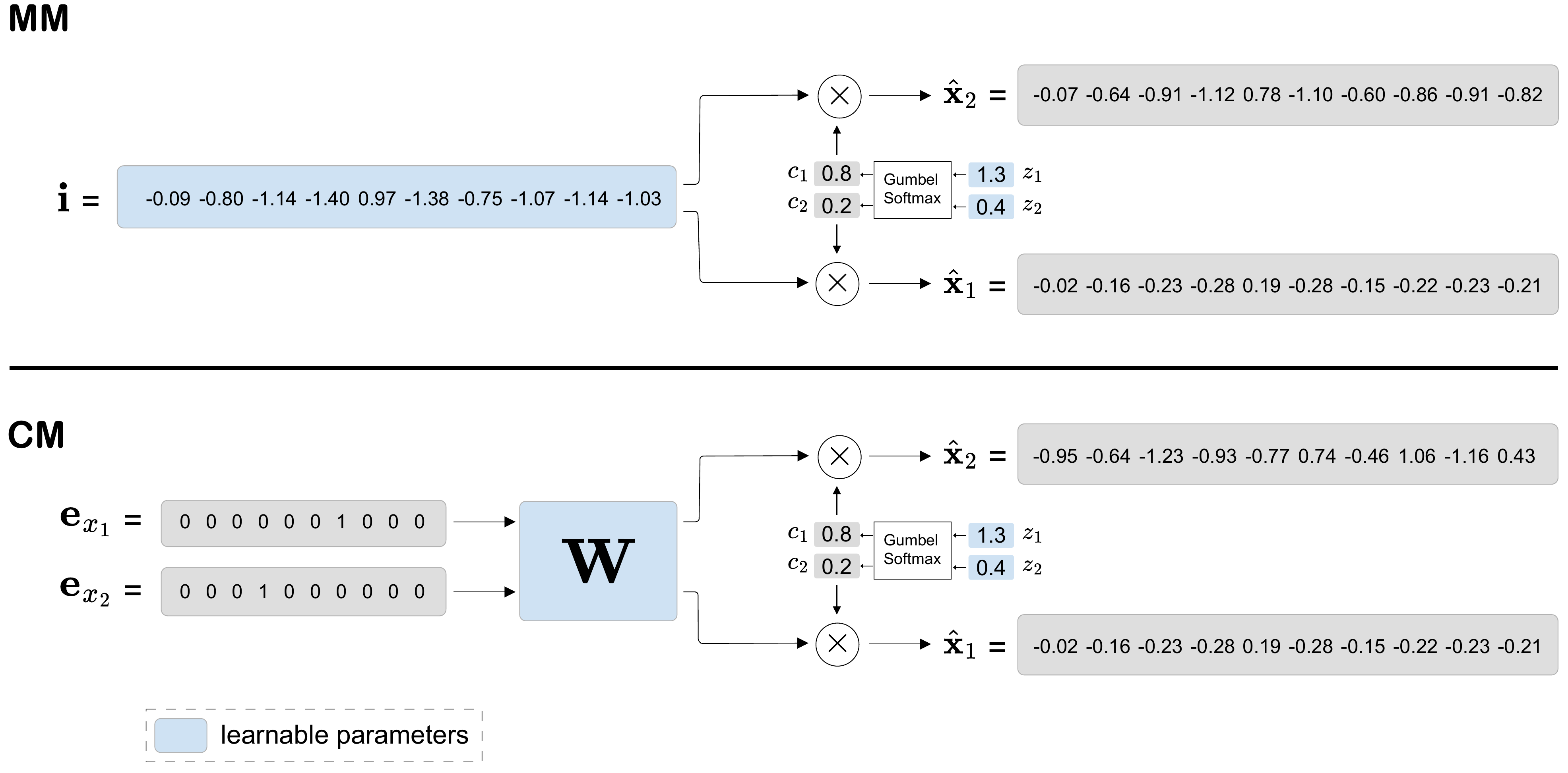}
    \caption{Exemplary visualization of the MM model (top) and CM model (bottom) from Section \ref{setup}.}
    \label{fig:model}
\end{figure*}

In both models, the structural parameters $c_1$ and $c_2$ are calculated from learnable logits $z_1, z_2$, given a temperature hyper-parameter $\tau$, as 
\begin{align}
    c_1, c_2 = \text{Gumbel-Softmax}(z_1, z_2 ; \tau) \, .
\end{align}
The Gumbel-Softmax is a variation of the softmax function where noise sampled from a Gumbel distribution is added to the logits $z_1, z_2$ before evaluating their softmax values $c_1, c_2$. The Gumbel-Softmax is commonly used as an approximation to represent samples from a discrete distribution in a differentiable way~\citep{gumbel_softmax}. The noise introduced by the Gumbel-Softmax serves the role of exploring both possible directions while raw logits are not too different, in order to not get stuck in one (possibly wrong) direction initially \citep{brouillard2020differentiable}. From a causal discovery point of view, the structural parameters $c_1, c_2$ can be viewed as the causal factorization learned by each model: $c_1 > 0.5$ or $c_1 < 0.5$ denotes the case where $X_1$ causes $X_2$ or vice versa.

Details about initialization and hyperparameter values are provided in Appendix~\ref{appendix:model_details}.

\subsubsection*{Training Paradigm}
\label{sec:training}
The models are trained to minimize cross-entropy between predicted and ground-truth values of batches (of size $B$) from an observational or interventional distribution:
\begin{align}
\mathcal{L} = -\frac{1}{B} \sum_{j=1}^B \sum_{i=1}^2 \mathbf{e}_{x_i, j} \cdot \log(\text{softmax}(\hat{\mathbf{x}}_{i, j})) 
\end{align}

For the factorization presented in Eq.~(\ref{eq:scm}), the joint log-likelihood is equivalent to the sum over conditional log-likelihoods of individual variables. If the $\hat{\mathbf{x}}_{i}$ can learn conditional distributions, $\mathcal{L}$ will thus minimize joint log-likelihood. In our MM model, and generally in the absence of input to condition on, however, $\hat{\mathbf{x}}_{i}$ can at most represent marginal distributions, meaning that $\mathcal{L}$ treats variables as independent without parents.

\section{Results}
\label{sec:results}
We explore how Bias 1 and Bias 2 influence the behavior of the MM and CM models to converge to the correct factorization $X_1 \rightarrow X_2$ ($c_1 = 1$) or to the incorrect factorization $X_2 \rightarrow X_1$ ($c_1 = 0$). We investigate the observational as well as the interventional case and whether related work is also susceptible to the biases.

For all following experiments on the convergence behavior of the models, 100 independent runs were conducted to gather expressive statistics. $K$ is set to 5.
We always perform interventions in random order. For the related approach of~\citet{bengio2019meta} and~\cite{lippe2022efficient}, the corresponding implementation was used\footnote{\href{https://github.com/ec6dde01667145e58de60f864e05a4/CausalOptimizationAnon}{https://github.com/ec6dde01667145e58de60f864e05a4/CausalOptimizationAnon}}\footnote{\href{https://github.com/phlippe/ENCO}{https://github.com/phlippe/ENCO}}. For reproducibility of our experiments, our code is available online
\footnote{\href{https://github.com/MoritzLange/bias-based-causal-discovery}{https://github.com/MoritzLange/bias-based-causal-discovery}}.

\begin{figure}[h!]
    \centering
    \begin{subfigure}{0.5\linewidth}
        \includegraphics[width=\linewidth]{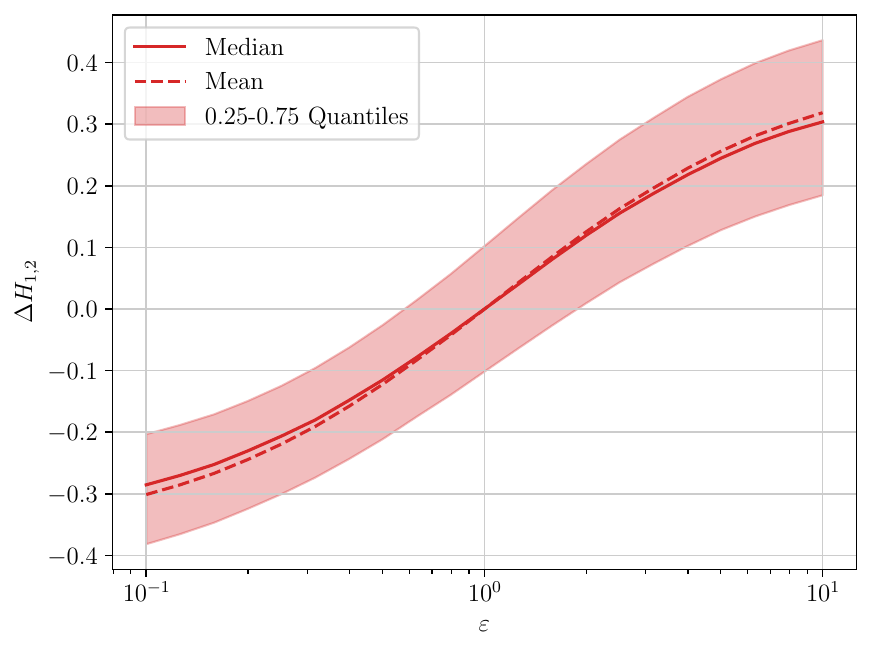}
        \caption{Distribution asymmetry over $\varepsilon$.}
        \label{fig:entropy_epsilon}
    \end{subfigure}%
    \hfill
    \begin{subfigure}{0.5\linewidth}
        \includegraphics[width=\linewidth]{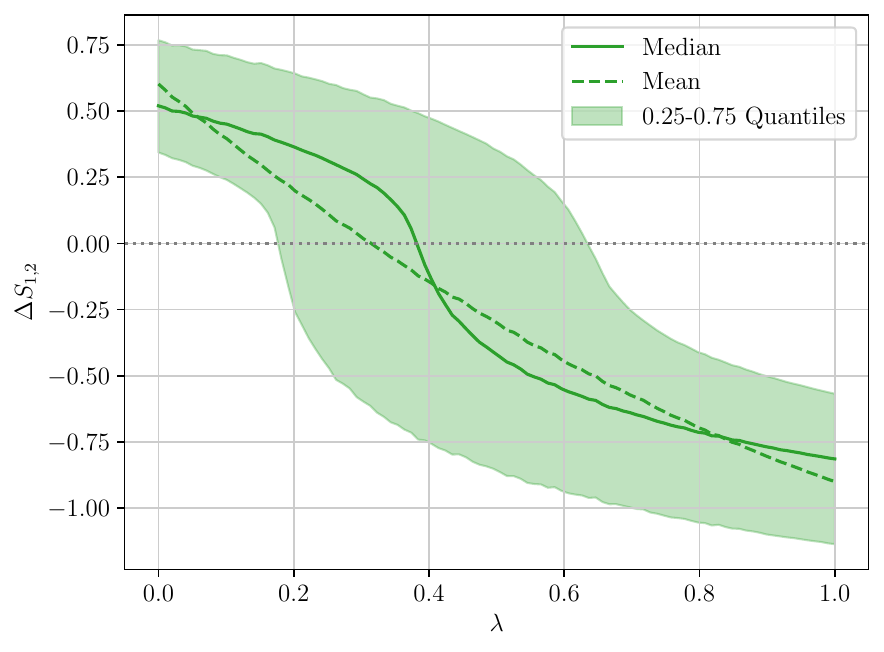}
        \caption{Distribution shift asymmetry over $\lambda$.}
        \label{fig:dist_shift_lambda}
    \end{subfigure}
    \caption{Bias 1 (Section \ref{sec:bias1_results}) and Bias 2 (Section \ref{sec:bias2_results}) for different values of $\varepsilon$ and $\lambda$ on the bivariate categorical setup with Dirichlet priors. Interventions happen in random order in Figure \ref{fig:dist_shift_lambda}.}
    \label{fig:bias_analysis}
\end{figure}

\subsection{Bias 1: Marginal Distribution Asymmetry}
\label{sec:bias1_results}

\paragraph{Behavior with changing $\varepsilon$}
Figure \ref{fig:entropy_epsilon} shows the behavior of Bias 1 under changes of $\varepsilon$. It shows that for $\varepsilon=1$, $\mathbb{E}\left[\Delta H_{1,2}\right]$ is = 0, i.e., the bias does not exist, and the distributions of $X_1$ and $X_2$ are symmetric. For $\varepsilon < 1$, $\mathbb{E}\left[\Delta H_{1,2}\right]$ is negative and the entropy of $X_1$ is smaller than that of $X_2$. For $\varepsilon > 1$, $\mathbb{E}\left[\Delta H_{1,2}\right]$ is positive and the entropy of $X_2$ is smaller. This demonstrates that the $\varepsilon$ parameter can be used to control the second bias and obtain distributions with specific values of $\mathbb{E}\left[\Delta H_{1,2}\right]$. Next, we examine how Bias 1 influences the convergence of the MM and the CM to either of the possible causal factorizations.

\paragraph{Observational case}
Figure \ref{fig:c1_over_e} shows the convergence behavior of the MM and CM model for different values of $\varepsilon$ in the observational case. Both the MM and the CM show a characteristic change in convergence. For $\varepsilon <1$, the models converge to $c_1=0$, favouring to predict $X_1$. For $\varepsilon =1$, the models do not converge to any direction, and for $\varepsilon > 1$, they favor predicting $X_2$. In summary, when $\mathbb{E}\left[\Delta H_{1,2}\right]=0 \ (\varepsilon=1)$, the models do not display a convergence behavior, and if $\mathbb{E}\left[\Delta H_{1,2}\right] \neq 0$, both models favor predicting the variable with the lower entropy. These results are closely related to what~\cite{beware} report for continuous distributions and show that gradient-based structure learning algorithms can be susceptible to differences in the entropy of observed distributions in the observational case. Since many structure learning algorithms only consider observational data, it is important to understand whether their convergence behavior is due to learning conditionals or exploiting asymmetries in distributions.

\paragraph{Interventional case}
Bias 1 also exerts an effect on the convergence of both models in the interventional case. Specifically, for certain intervention-rate scenarios (here, $\lambda=0.2$), Bias 1 is capable of reversing the direction toward which a gradient-based causal discovery model converges. As illustrated in Figure~\ref{fig:c1_over_e_interventional}, the proportion of independent runs converging to $c_1=1$ increases with higher values of $\varepsilon$, mirroring the trend observed in the observational case. This highlights that, depending on the magnitude of $\Delta S_{1,2}$ induced by interventions, Bias 1 can still reverse the causal direction learned by gradient-based models.

\begin{figure}[h!]
    \centering
    \begin{subfigure}{0.5\linewidth}
        \includegraphics[width=\linewidth]{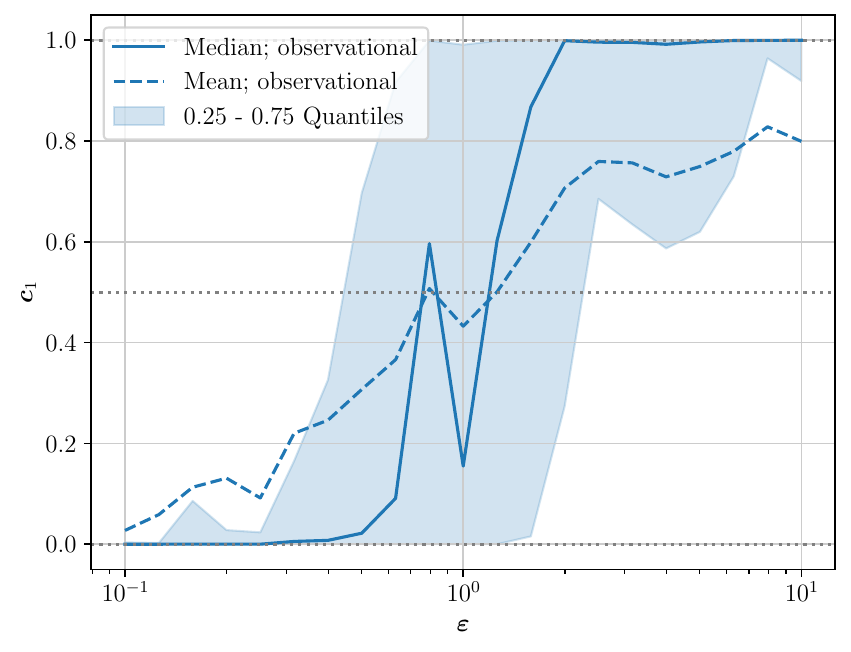}
        \caption{Convergence behavior MM.}
        \label{fig:bias1_MM_obs}
    \end{subfigure}%
    \hfill
    \begin{subfigure}{0.5\linewidth}
        \includegraphics[width=\linewidth]{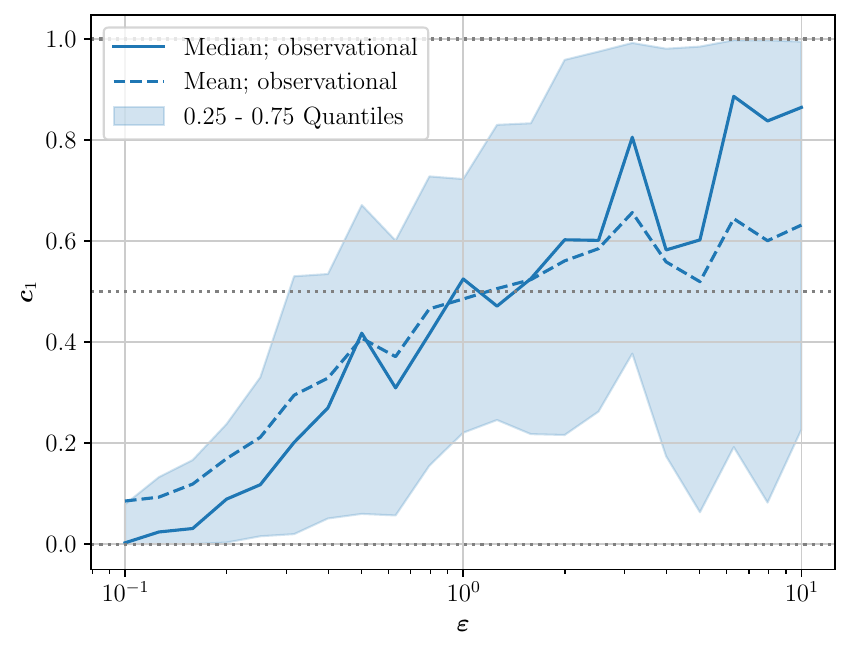}
        \caption{Convergence behavior CM.}
        \label{fig:bias1_CM_obs}
    \end{subfigure}
    \caption{Bias 1 in the observational case: Value of $c_1$ after 300 epochs, plotted over $\varepsilon$ (log scale) for conditional model and marginal model. The statistics are based on 100 runs per value of $\varepsilon$.}
    \label{fig:c1_over_e}
\end{figure}

\begin{figure}[h!]
    \centering
    \begin{subfigure}{0.5\linewidth}
        \includegraphics[width=\linewidth]{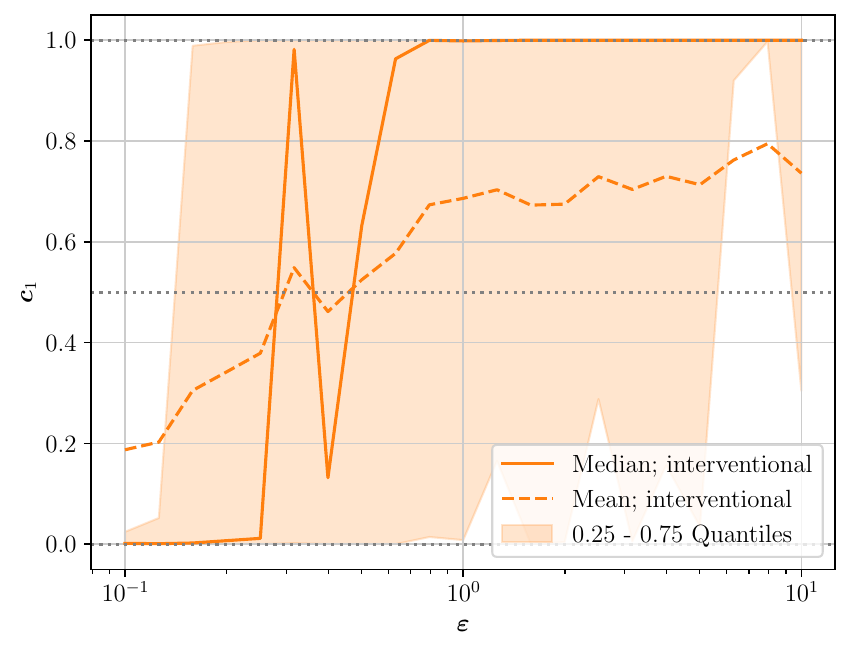}
        \caption{Convergence behavior MM.}
        \label{fig:bias1_MM_interventional}
    \end{subfigure}%
    \hfill
    \begin{subfigure}{0.5\linewidth}
        \includegraphics[width=\linewidth]{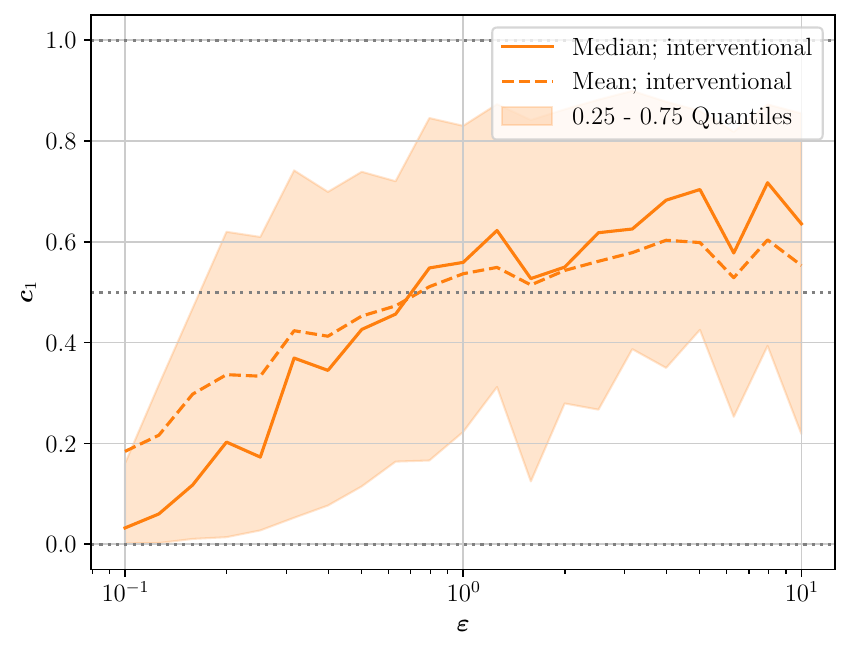}
        \caption{Convergence behavior CM.}
        \label{fig:bias1_CM_interventional}
    \end{subfigure}
    \caption{Bias 1 in the interventional case: Value of $c_1$ after 300 epochs, plotted over $\varepsilon$ (log scale) for conditional model and marginal model with $\lambda=0.2$. The statistics are based on 100 runs per value of $\varepsilon$.}
    \label{fig:c1_over_e_interventional}
\end{figure}

\subsection{Bias 2: Marginal Distribution Shift Asymmetry}
\label{sec:bias2_results}

\paragraph{Influence of $\lambda$ on distribution shift $\Delta S$} The second bias exclusively impacts the interventional setting. To isolate its effect, we eliminate Bias 1 by setting $\varepsilon=1$ throughout this section. Figure \ref{fig:dist_shift_lambda} presents empirical measurements of $\Delta S_{1,2}$ across varying values of $\lambda$. We observe that as $\lambda$ increases, $\Delta S_{1,2}$ strictly monotonically decreases. This monotonic behavior aligns with expectations from Section \ref{sec:bias2}: with larger $\lambda$, the frequency of second and third intervention types increases, gradually causing larger distributional shifts in $X_2$ on average. Additionally, achieving parity in distribution shifts ($\Delta S_{1,2} = 0$) requires a greater proportion of interventions on $X_1$ compared to $X_2$. Further analysis of how different intervention scenarios influence $\Delta S_{1,2}$ is available in Appendix Section \ref{appendix:intervention_cases}. Additionally, Appendix Section \ref{appendix:shift}, specifically Figure \ref{fig:ce_shift_lambda}, demonstrates that employing cross entropy instead of Kullback-Leibler divergence yields a similar trend for $\Delta S_{1,2}^{CE}$.

\begin{figure}[h!]
    \centering
    \begin{subfigure}{0.5\linewidth}
        \includegraphics[width=\linewidth]{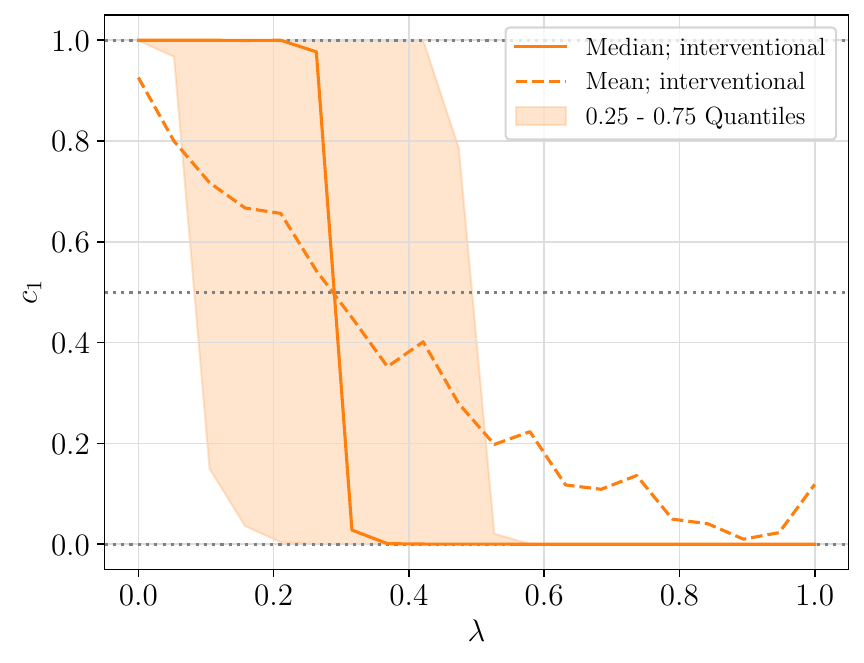}
        \caption{Convergence behavior MM.}
        \label{fig:MM_32batches}
    \end{subfigure}%
    \hfill
    \begin{subfigure}{0.5\linewidth}
        \includegraphics[width=\linewidth]{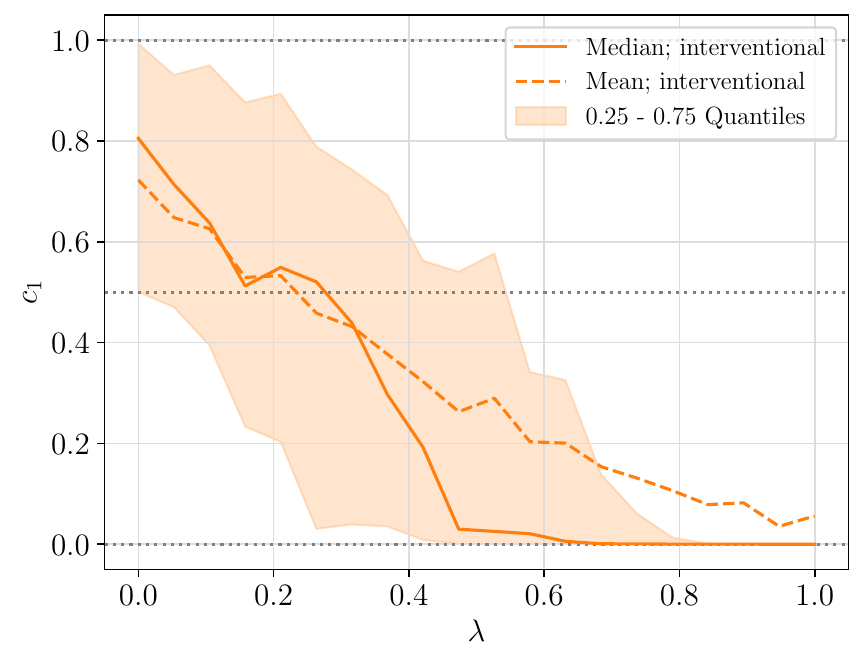}
        \caption{Convergence behavior CM.}
        \label{fig:CM_32batches}
    \end{subfigure}
    \caption{Bias 2: Value of $c_1$ after 300 epochs, plotted over $\lambda$ for conditional model and marginal model. The statistics are based on 100 runs per value of $\lambda$.}
    \label{fig:c1_lambda}
\end{figure}

\paragraph{Gradual change of model convergence with $\lambda$} Figure \ref{fig:c1_lambda} shows the convergence of the MM and CM models under Bias 2 across different values of $\lambda$. Both models perceive the variable with the faster-changing marginal distribution (indicated by $\Delta S_{i,j}>0$) as the independent variable. For low values of $\lambda$ (predominantly interventions on $X_1$), both models consistently converge to the factorization $X_1 \rightarrow X_2$ (i.e., $c_1 \approx 1$). Conversely, for high values of $\lambda$, the models gradually shift toward the opposite factorization, $X_2 \rightarrow X_1$ ($c_1 \approx 0$). This gradual transition in model convergence closely follows the pattern of changes observed in $\Delta S_{1,2}$, suggesting that the magnitude and direction of the distribution shift ($\Delta S$) directly steer the convergence behavior of the models. Importantly, the observed changes in $\Delta S_{1,2}$ alone are sufficient to fully reverse the direction toward which a gradient-based causal discovery model (whether learning conditional distributions or marginals only) converges, even in the absence of differences in entropy (Bias 1).

Lastly, consistent with observations from Bias 1, the MM again more effectively exploits the asymmetry, with its mean $c_1$ value closely tracking the changes in $\Delta S_{1,2}$ shown in Figures \ref{fig:dist_shift_lambda} and \ref{fig:ce_shift_lambda}. Although the CM follows a similar general trend, it exhibits a flatter convergence curve. We further observe that the speed and stability of model convergence depend significantly on each model's ability to capture the underlying distribution before interventions occur. Figure \ref{fig:c1_lambda_all} in Appendix Section \ref{appendix:n_batches} illustrates this dependency.

\subsection{Effects of Distributional Biases in Previous Work}\label{sec:related_work_results}
In this section, we explore how the two biases influence other gradient-based causal discovery methods that consider the bivariate categorical setting.

\begin{figure}[h!]
    \centering
    \begin{subfigure}{0.48\linewidth}
        \includegraphics[width=\linewidth]{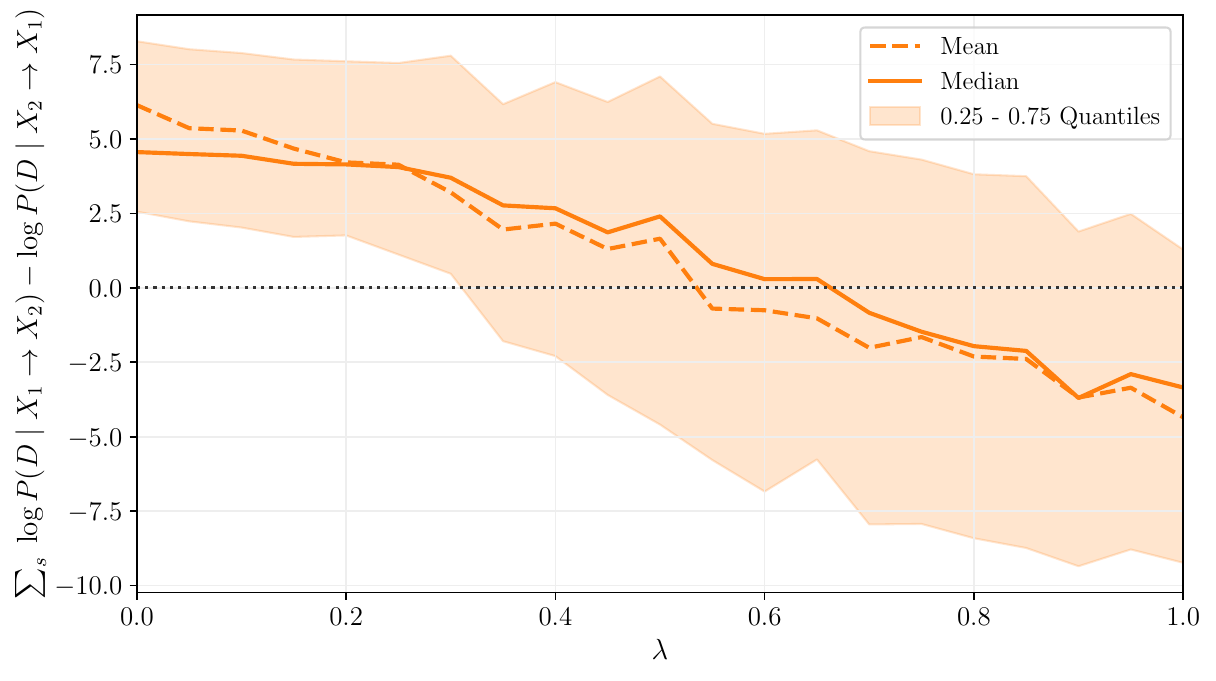}
        \caption{The differences in log-likelihood between model \mbox{$X_1 \rightarrow X_2$} and model \mbox{$X_2 \rightarrow X_1$}, summed over adaptation steps $s$. A value other than 0 means one model adapts faster than the other.}
        \label{fig:log_likelihood_diff}
    \end{subfigure}%
    \hfill
    \begin{subfigure}{0.48\linewidth}
        \includegraphics[width=\linewidth]{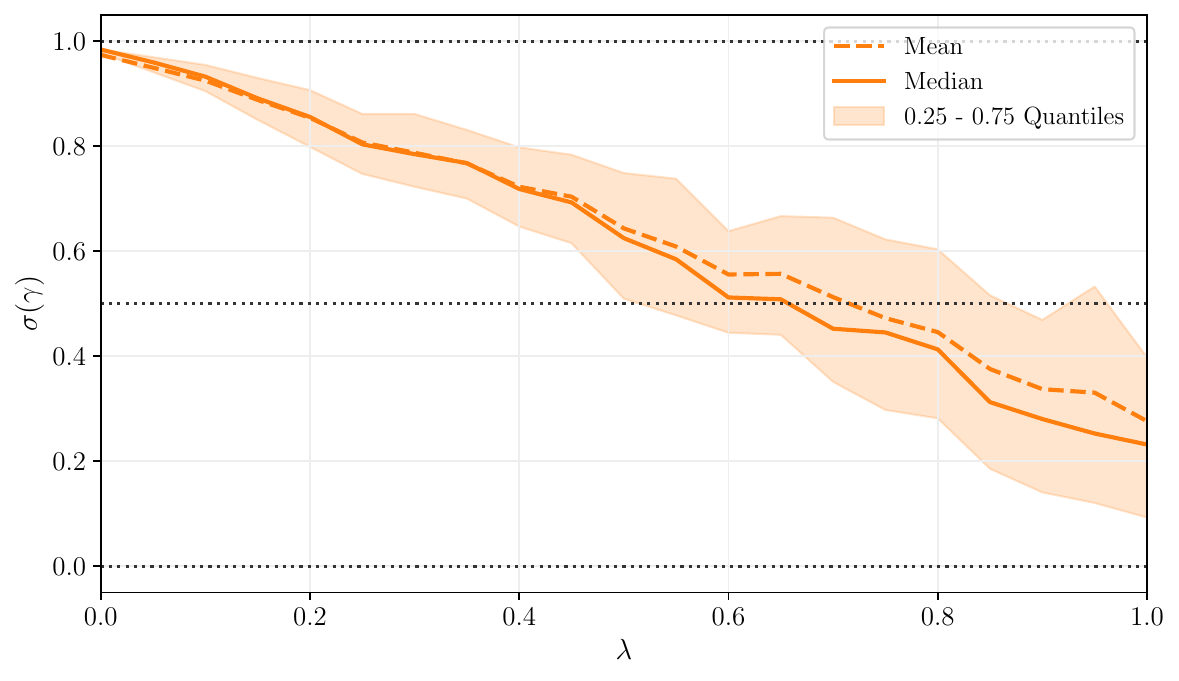}
        \caption{The causal parameter $\sigma(\gamma)$ indicates the belief of the model that \mbox{$X_1 \rightarrow X_2$} is the correct factorization (if $\sigma(\gamma) > 0.5$), or \mbox{$X_2 \rightarrow X_1$} (if $\sigma(\gamma)$ < 0.5) \\}
        \label{fig:gamma_over_lambda}
    \end{subfigure}
    \caption{Log-likelihoods (left) and derived causal factorizations (right) of the approach presented by \citet{bengio2019meta}, for different $\lambda$ values. For all runs, we set $\varepsilon=1$.}
    \label{fig:bengio}
\end{figure}

\begin{figure}[h!]
    \centering
    \includegraphics[width=0.5\linewidth]{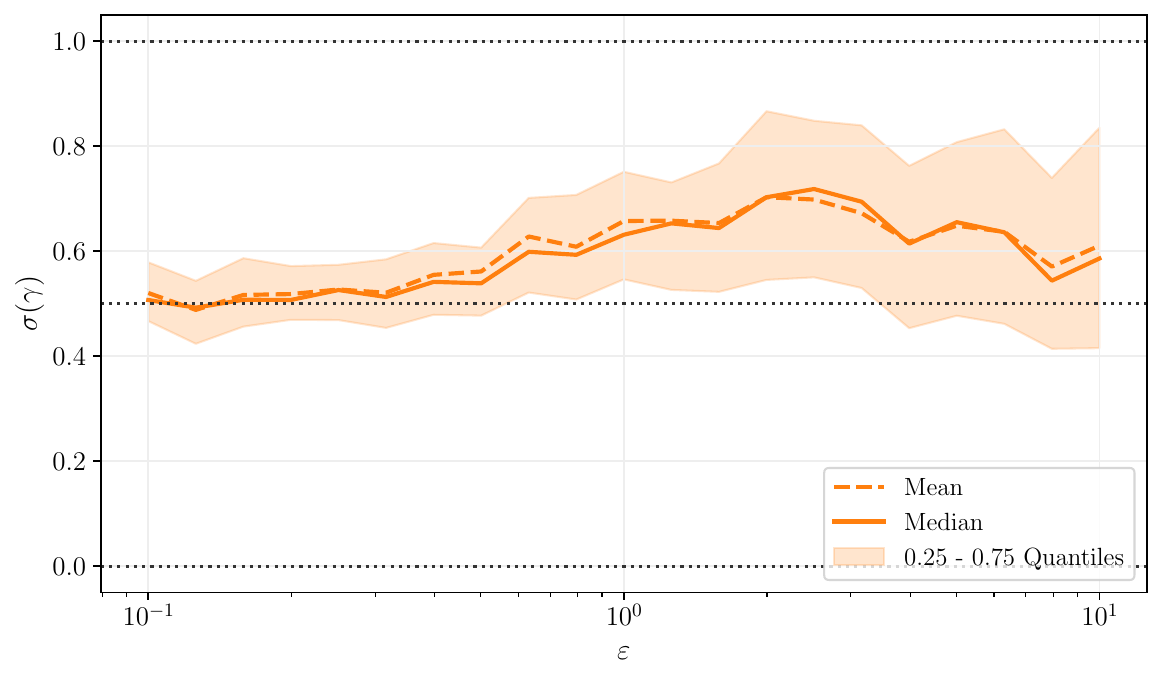}
    \caption{Causal factorization of the approach presented by \citet{bengio2019meta}, for different $\varepsilon$ values. The intervention rate was kept at $\lambda=0.5$ for all $\varepsilon$ values.}
    \label{fig:gamma_over_epsilon}
\end{figure}

\paragraph{Meta-transfer objective} We first investigate how the meta-transfer objective model proposed by~\cite{bengio2019meta} behaves with regard to Bias 1 and 2. As opposed to our illustrator models, this model first learns the probabilities of both possible causal factorizations $X_1 \rightarrow X_2$ and $X_2 \rightarrow X_1$ independently on observational data. Afterwards, both models achieve the same likelihood $\mathcal{L}$ on data sampled from the observational distribution, i.e. $\mathcal{L}_{X_1\rightarrow X_2}=\mathcal{L}_{X_2\rightarrow X_1}$, because both have learned the correct joined distribution. In a second step, both pre-trained models are trained on interventional data. Their likelihoods now start to differ, until they have converged to the new joint distribution, because one factorization can adapt faster to the new interventional distributions than the other. The speed of adaptation depends on the distribution shifts described in Bias 2. Note that $\mathcal{L}_{X_1 \rightarrow X_2}$ and $\mathcal{L}_{X_2 \rightarrow X_1}$ are the accumulated marginal and conditional likelihoods.

While updating the models on interventional data, a scalar parameter $\gamma$, which encodes belief about the causal direction, is simultaneously optimized to minimize the following meta-objective, the \textit{regret}:
\begin{equation*}
    \mathcal{R} = -\log\left[\sigma(\gamma) \mathcal{L}_{X_1 \rightarrow X_2} + \left(1 - \sigma(\gamma)\right) \mathcal{L}_{X_2 \rightarrow X_1}\right]
\end{equation*}
Its gradient is $\frac{\partial \mathcal{R}}{\partial \gamma} = \sigma(\gamma) - \sigma(\gamma + \log \mathcal{L}_{X_1\rightarrow X_2} - \log \mathcal{L}_{X_2\rightarrow X_1})$, so the difference in likelihoods during model adaptation directly determines whether $\sigma(\gamma)$ converges to 0 or 1. In this paper, $\sigma(\cdot)$ is the sigmoid function.

Figure~\ref{fig:bengio} shows the behavior of this approach for varying $\lambda$. Figure~\ref{fig:log_likelihood_diff} shows the raw difference of the log-likelihoods of both models, summed over the first $s$ adaptation steps. It becomes apparent that the adaptation speed of each factorization is correlated with the distribution shift asymmetry presented in Bias 2. The convergence of $\sigma(\gamma)$ in Figure~\ref{fig:gamma_over_lambda} behaves accordingly, and the predicted causal direction changes with increasing $\lambda$. This is in line with the results for our models presented in Figure~\ref{fig:c1_lambda} and shows the susceptibility of the approach by \cite{bengio2019meta} to Bias 2.

We have further analyzed the susceptibility of the model proposed by~\cite{bengio2019meta} to Bias 1, which is shown in Figure~\ref{fig:gamma_over_epsilon}. For this analysis, we set $\lambda=0.5$ to obtain unbiased interventions. This means that $\sigma(\gamma)$ never reaches values near 1, as it does in the original paper where $\lambda$ is always 0. Although not as pronounced as for our models above, we observe a similar trend: the convergence to the causal hypothesis $X_1 \rightarrow X_2$ ($\sigma(\gamma)=1$)  generally increases for larger $\varepsilon$ values. 

\paragraph{ENCO} We also investigate the susceptibility of the ENCO model by~\citet{lippe2022efficient} to our proposed biases. As explained above, this approach splits the belief about a causal relation between two variables into a parameter $\sigma(\gamma_{ij})$ that encodes the belief that an edge between $X_i$ and $X_j$ exists in the SCM, and a parameter $\sigma(\theta_{ij})$ that encodes the direction of this edge. Figure~\ref{fig:enco_lambda} shows the behavior of those models for changing values of $\lambda$ and the edge $X_1 \rightarrow X_2$. We find that the parameter coding for edge existence, $\sigma(\gamma_{12})$, slowly declines from 1 to 0.5 but always stays above 0.5, while the parameter encoding edge direction, $\sigma(\theta_{12})$, stays near 1 for all values of $\lambda$. Together, this shows that the model is robust to the influences of Bias 2, as for all values of $\lambda < 1$, the model correctly predicts with a confidence greater than 0.5 that the edge $X_1 \rightarrow X_2$ exists. This robustness can be explained by looking at the gradients for both parameters. The gradient of the overall loss $\tilde{\mathcal{L}}$ for $\gamma_{12}$ is per~\cite{lippe2022efficient} given as:
\begin{equation}
    \frac{\partial \tilde{\mathcal{L}}}{\partial \gamma_{12}} = \sigma'(\gamma_{12}) \cdot \sigma(\theta_{12}) \cdot \mathbb{E} \left[ \mathcal{L}_{X_1 \rightarrow X_2}(X_2) - \mathcal{L}_{X_1 \not\rightarrow X_2}(X_2)  + \lambda_{\text{sparse}}\right]
\end{equation}
In the notation adopted by \citet{lippe2022efficient}, $\mathcal{L}_{X_i \rightarrow X_j}(X_j)$ and $\mathcal{L}_{X_i \not\rightarrow X_j}(X_j)$ denote the negative log-likelihoods (a different notation than for the meta-transfer objective above!) for variable $X_j$, if the edge $X_i \rightarrow X_j$ is included into the model ($X_j$ is conditioned on $X_i$) or not ($X_j$ is not conditioned on $X_i$), respectively. $\lambda_{sparse}$ is a regularization term to favor sparser causal graphs. Importantly, the gradient is artificially suppressed for the intervened-upon variable, since it is by definition independent. The formula explains why the model is robust to increasing values of $\lambda$: for $\lambda=0$, in all samples the true causal direction $X_1 \rightarrow X_2$ holds, so including this edge should always yield a lower loss opposed to excluding it. As $\lambda$ grows, the causal relationship is present in fewer samples and the gradient is suppressed in more samples, so the signal to push $\gamma_{12}$ upwards (through gradient descent) becomes weaker, eventually moving to total uncertainty, $\sigma(\gamma_{12})=0.5$, at $\lambda=1$.
\begin{figure}[h!]
    \centering
    \begin{subfigure}{0.48\linewidth}
        \includegraphics[width=\linewidth]{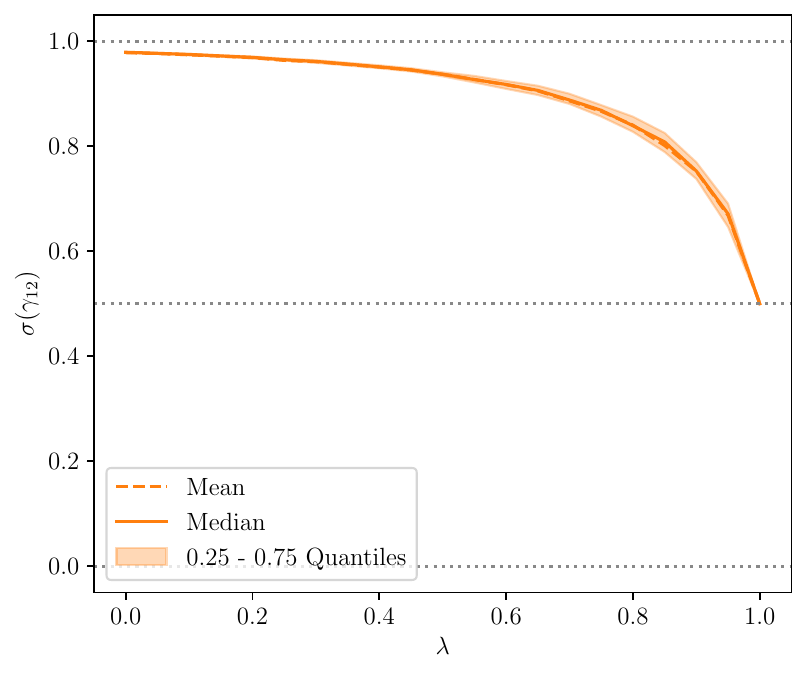}
        \caption{The parameter $\sigma(\gamma_{12})$ encodes the belief of the model that an edge in the learned SCM, i.e. a causal relationship, exists between $X_1$ and $X_2$.}
        \label{fig:gamma_over_lambda_enco}
    \end{subfigure}%
    \hfill
    \begin{subfigure}{0.48\linewidth}
        \includegraphics[width=\linewidth]{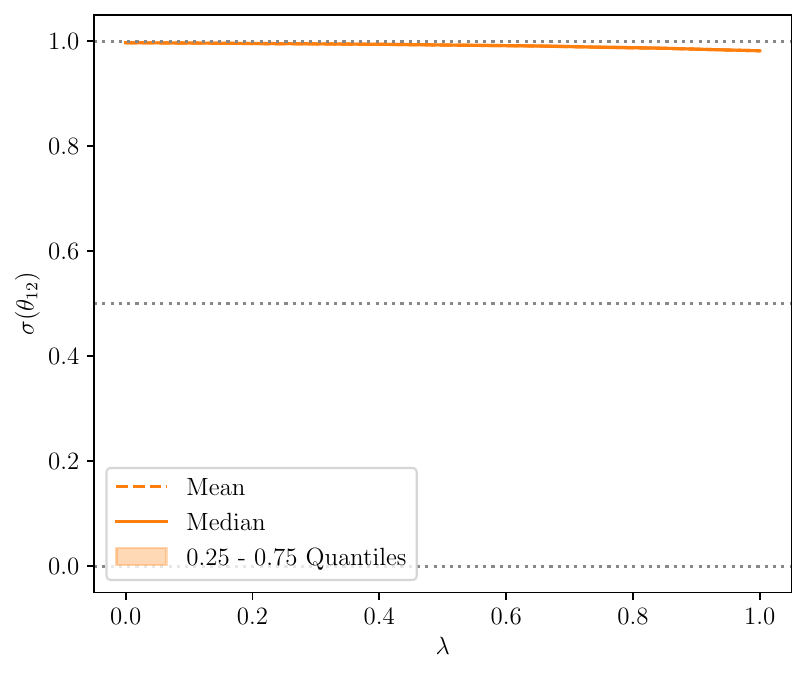}
        \caption{The parameter $\sigma(\theta_{12})$ indicates the belief of the model that \mbox{$X_1 \rightarrow X_2$} is the correct direction for the edge.\\}
        \label{fig:theta_over_lambda_enco}
    \end{subfigure}
    \caption{Edge existence belief (left) and edge direction belief (right) of the approach presented by \citet{lippe2022efficient}, for different $\lambda$ values. For all runs, we set $\varepsilon=1$.}
    \label{fig:enco_lambda}
\end{figure}

\begin{figure}[h!]
    \centering
    \begin{subfigure}{0.48\linewidth}
        \includegraphics[width=\linewidth]{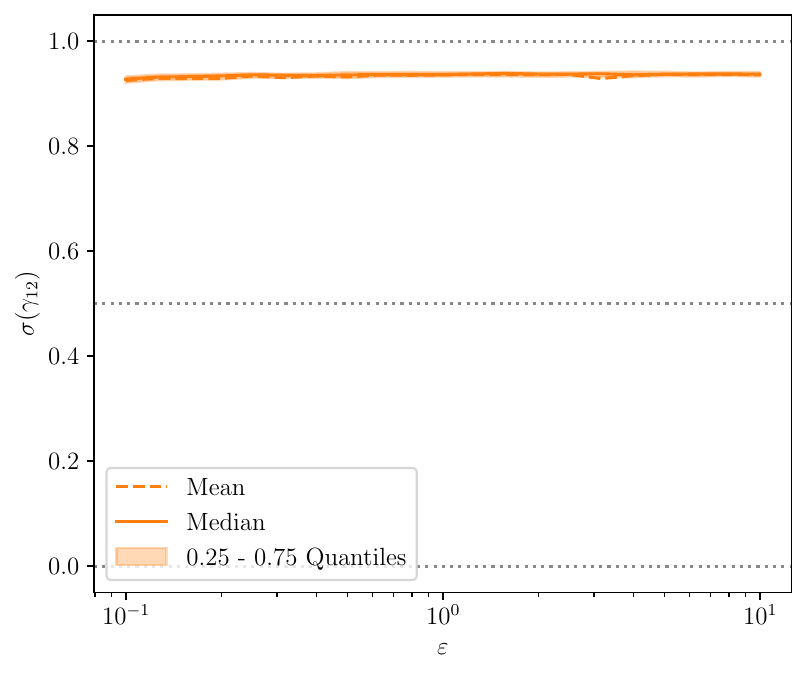}
        \caption{The parameter $\sigma(\gamma_{12})$ encodes the belief of the model that an edge in the learned SCM, i.e. a causal relationship, exists between $X_1$ and $X_2$.}
        \label{fig:gamma_over_epsilon_enco}
    \end{subfigure}%
    \hfill
    \begin{subfigure}{0.48\linewidth}
        \includegraphics[width=\linewidth]{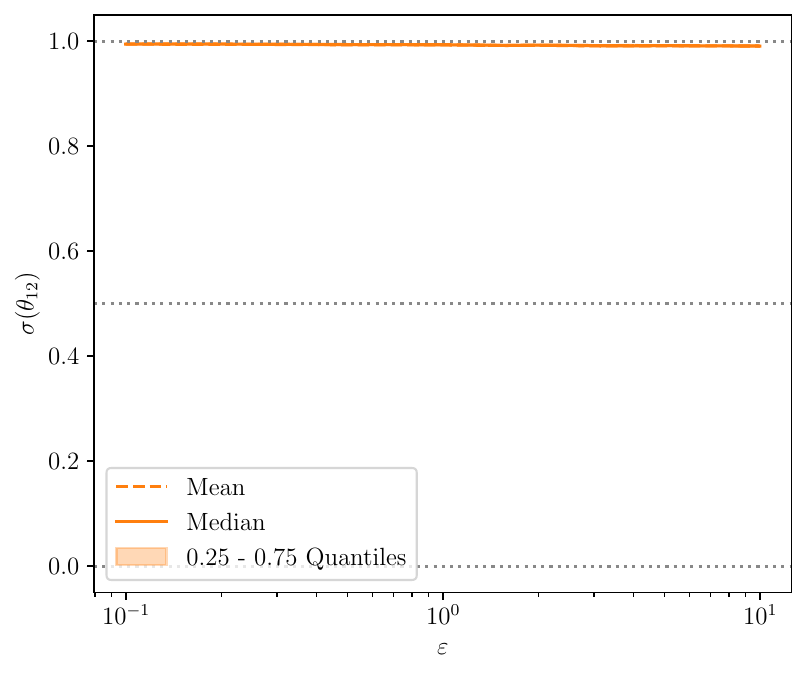}
        \caption{The parameter $\sigma(\theta_{12})$ indicates the belief of the model that \mbox{$X_1 \rightarrow X_2$} is the correct direction for the edge.\\}
        \label{fig:theta_over_epsilon_enco}
    \end{subfigure}
    \caption{Edge existence belief (left) and edge direction belief (right) of the approach presented by \citet{lippe2022efficient}, for different $\varepsilon$ values. For all runs, we use $\lambda=0.5$.}
    \label{fig:enco_epsilon}
\end{figure}

The robustness of $\theta_{12}$ can similarly be explained by looking at the corresponding gradient:
\begin{align}
\frac{\partial \tilde{\mathcal{L}}}{\partial \theta_{12}} = \sigma'(\theta_{12}) \Bigg[
& p(I_{X_1}) \cdot \sigma(\gamma_{12}) \cdot \mathbb{E} \left( \mathcal{L}_{X_1 \rightarrow X_2}(X_2) - \mathcal{L}_{X_1 \not\rightarrow X_2}(X_2) \right) \nonumber \\
& - p(I_{X_2}) \cdot \sigma(\gamma_{21}) \cdot \mathbb{E}\left( \mathcal{L}_{X_2 \rightarrow X_1}(X_1) - \mathcal{L}_{X_2 \not\rightarrow X_1}(X_1) \right)
\Bigg]
\end{align}
This gradient comprises two main components, each driven by interventions on one of the two variables. The relative influence of these components on the expected gradient is modulated by $\lambda$, i.\ e.\ the proportion of interventions. In the first term, $p(I_{X_1})$ is the probability of intervening on $X_1$, so it is large when $\lambda$ is small and $X_1$ is intervened upon often. If $X_1 \rightarrow X_2$ is the true causal direction, modeling $X_2$ as dependent on $X_1$ leads to a better (smaller) negative log-likelihood than assuming independence, so $\mathcal{L}_{X_1 \rightarrow X_2}(X_2) - \mathcal{L}_{X_1 \not\rightarrow X_2}(X_2)$ is negative. Gradient descent then increases $\theta_{12}$. This gradient provides a direct signal for the correct causal direction.

The second term is weighted by $p(I_{X_2})$, which has a high value when $\lambda$ is large. If $X_1 \rightarrow X_2$ is true, then $X_1$ is invariant to interventions on $X_2$. Consequently, attempting to model $X_1$ conditioned on $X_2$ should offer no predictive improvement for $X_1$. Thus, the difference $\mathcal{L}_{X_2 \rightarrow X_1}(X_1) - \mathcal{L}_{X_2 \not\rightarrow X_1}(X_1)$ is expected to be zero or even positive, because the conditioning introduces noise. Since this entire term is subtracted in the gradient for $\theta_{12}$, it also contributes to pushing $\theta_{12}$ upwards.

Importantly, regardless of the value of $\lambda$, at least one of these mechanisms provides a signal (or at worst no signal, but never a contradicting one) for the correct orientation. When $\lambda$ is low, the direct causal signal from the first term dominates. When $\lambda$ is high, the invariance-based signal from the second term dominates. For intermediate values of $\lambda$, both types of interventions contribute.

Further, Figure~\ref{fig:enco_epsilon} shows that both edge parameters are robust to Bias 1. This can also be read directly from the gradients above: the formulas do not compare likelihoods of variables themselves (which would be susceptible to Bias 1) but compare likelihoods of the same variable with or without edges added. This removes any influence of Bias 1.

Hence, the ENCO model is robust to both biases. The reason for this is that, as explained, ENCO does not find the correct causal graph by exploiting differences in distributions before and after interventions, but rather by probing which additions to the causal structure yield improvements to the likelihood of the data across interventions.

\section{Conclusion}
\label{sec:discussion}
In this work, we inspect two different biases on the common choice of categorical distributions generated using a Dirichlet prior, and their influence on the behavior of gradient-based causal discovery methods in the bivariate case.

Bias 1 quantifies marginal distribution asymmetry and is measured through the difference in entropy between distributions. We show how it can be controlled, in the bivariate categorical setup, by a parameter $\varepsilon$. Bias 2 quantifies the distribution shift asymmetry of causally related variables under interventions. We analyze these changes and show how the intervention rate $\lambda$ can affect Bias 2, depending on the rate of interventions on each variable. This understanding of parameters $\varepsilon$ and $\lambda$ and their effect on the biases allows more rigorous testing of the robustness of causal discovery approaches.

We find that gradient-based methods can be susceptible to both biases -- in particular if their learning involves a direct competition between different causal factorizations.

To illustrate this, we present two simple gradient-based models for causal discovery that transparently choose between conflicting factorizations. We find that neither these illustrator models, nor the meta-transfer objective approach by \citet{bengio2019meta} are robust to the biases. Their predicted causal factorization changes when the biases are removed or reversed, even if the underlying causal structure that generates the data was unchanged.
The approach by \citet{lippe2022efficient}, on the other hand, is robust to both biases because it analyzes causal dependencies separately for each variable.

We believe that the presented biases and methods to control them will help other researchers to better analyze the robustness of their methods. We suggest to test methods on data that was generated for different values of parameters $\varepsilon$ and $\lambda$, and to consider their effect already in the design of new approaches.

\section*{Acknowledgments}

This research was supported by the research training group ``Dataninja'' (Trustworthy AI for Seamless Problem Solving: Next Generation Intelligence Joins Robust Data Analysis) funded by the German federal state of North Rhine-Westphalia.

\section*{Ethics Statement}

All experiments in this paper are conducted on synthetic data; thus, no issues regarding privacy or human subjects arise. Our work
does not inherently facilitate harmful applications or negative societal impacts. The code is publicly available
to enable transparency and reproducibility. The authors declare no competing interests.

\bibliography{references}
\bibliographystyle{tmlr}


\newpage
\appendix
\section{Appendix}

\subsection{Definitions}
\label{appendix:definitions}
To avoid ambiguity, we provide the common definitions of entropy and Kullback-Leibler divergence for discrete variables that we use throughout this paper, in particular also those for conditional distributions.

The entropy of a variable $X_i$ is defined as
\[
H(X_i) = - \sum_{x_i \in X_i} P(x_i) \log P(x_i) \, .
\]
Conditional entropy is then defined as \citep[Sec. 2.5.6]{Bishop_Bishop_2024}
\[
H(X_i|X_j) = - \sum_{x_i \in X_i} \sum_{x_j \in X_j} P(x_i, x_j) \log P(x_i|x_j) \, .
\]
The Kullback-Leibler divergence of two distributions $P'(X_i)$ and $P(X_i)$ is defined as
\[
D_{KL}\Bigl(P'(X_i) \,\big\|\, P(X_i)\Bigr) = \sum_{x_i \in X_i} P'(x_i) \log \frac{P'(x_i)}{P(x_i)} \, .
\]
The conditional Kullback-Leibler divergence, consequently, is defined as \citep[Eq. 2.65]{cover}
\[
D_{KL}\Bigl(P'(X_i|X_j) \,\big\|\, P(X_i|X_j)\Bigr) = \sum_{x_i \in X_i} \sum_{x_j \in X_j} P'(x_i, x_j) \log \frac{P'(x_i|x_j)}{P(x_i|x_j)} \, .
\]

\subsection{Relative Distribution Shift of Dependent Variable}
\label{appendix:maths}

\begin{theorem}[Distribution shift asymmetry under intervention on causal variable]
\label{theorem:bias2}
Consider a bivariate causal model \(X_1 \to X_2\) with distributions
\[
X_1 \sim P(X_1), \vspace{1cm} X_2 \sim P(X_2 \mid X_1).
\]
An intervention on the cause variable \(X_1\) changes its distribution from \(P(X_1)\) to \(P'(X_1)\) while leaving the conditional mechanism \(P(X_2 \mid X_1)\) unchanged. Then, the induced marginal distribution $P(X_2)$ also changes to $P'(X_2)$.
It holds that
\begin{equation}
    D_{KL}\Bigl(P'(X_2) \,\big\|\, P(X_2)\Bigr) \leq D_{KL}\Bigl(P'(X_1) \,\big\|\, P(X_1)\Bigr).
\end{equation}
\end{theorem}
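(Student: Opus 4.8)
The plan is to recognize this inequality as an instance of the data-processing inequality for Kullback–Leibler divergence, applied to the fixed causal mechanism $P(X_2 \mid X_1)$ viewed as a Markov kernel. First I would write out the two joint distributions that the pre- and post-intervention regimes induce over $(X_1, X_2)$. Because intervening on $X_1$ leaves the conditional mechanism untouched, both joints share the \emph{same} conditional factor:
\[
P(X_1, X_2) = P(X_1)\,P(X_2 \mid X_1), \qquad P'(X_1, X_2) = P'(X_1)\,P(X_2 \mid X_1).
\]

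Next I would expand $D_{KL}\bigl(P'(X_1,X_2)\,\|\,P(X_1,X_2)\bigr)$ via the chain rule for KL divergence in the two opposite orderings of the variables, using the conditional KL divergence defined in Appendix~\ref{appendix:definitions}. Factoring the cause first, and using that the two joints agree on $P(X_2\mid X_1)$, the conditional term $D_{KL}\bigl(P'(X_2\mid X_1)\,\|\,P(X_2\mid X_1)\bigr)$ is identically zero, so
\[
D_{KL}\Bigl(P'(X_1,X_2) \,\big\|\, P(X_1,X_2)\Bigr) = D_{KL}\Bigl(P'(X_1) \,\big\|\, P(X_1)\Bigr).
\]
Factoring the effect first instead gives
\[
D_{KL}\Bigl(P'(X_1,X_2) \,\big\|\, P(X_1,X_2)\Bigr) = D_{KL}\Bigl(P'(X_2) \,\big\|\, P(X_2)\Bigr) + D_{KL}\Bigl(P'(X_1\mid X_2) \,\big\|\, P(X_1\mid X_2)\Bigr).
\]

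Finally, since every Kullback–Leibler divergence is non-negative, the conditional term $D_{KL}\bigl(P'(X_1\mid X_2)\,\|\,P(X_1\mid X_2)\bigr)$ on the right-hand side is $\geq 0$. Dropping it and combining the two displayed identities yields exactly
\[
D_{KL}\Bigl(P'(X_2) \,\big\|\, P(X_2)\Bigr) \leq D_{KL}\Bigl(P'(X_1) \,\big\|\, P(X_1)\Bigr),
\]
which is the claim. The main obstacle is conceptual rather than computational: it lies in pinning down, from the intervention semantics, that the marginal $P'(X_2)$ is obtained by pushing $P'(X_1)$ through the \emph{same} kernel as $P(X_2)$, i.e.\ $P'(X_2\mid X_1=x_1)=P(X_2\mid X_1=x_1)$ for every $x_1$. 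This is precisely the assumption that makes one conditional term vanish and collapses the whole statement to the non-negativity of KL. If I wanted to avoid invoking the chain rule as a black box, I could instead establish the data-processing step directly by applying the log-sum inequality to the marginalization $P(x_2)=\sum_{x_1}P(x_2\mid x_1)P(x_1)$, but the two-way chain-rule expansion is the cleaner route.
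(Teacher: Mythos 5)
Your proof is correct and takes essentially the same approach as the paper: both identify the claim as the data-processing inequality for KL divergence applied to the fixed kernel $P(X_2\mid X_1)$. The only difference is that the paper cites the inequality directly from Cover and Thomas, whereas you re-derive it inline via the two-way chain-rule decomposition of the joint divergence, which is the standard textbook proof of that same inequality.
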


\begin{proof}
\label{proof:bias2}
Let \(p\) and \(q\) be any two probability distributions on a space \(\mathcal{X}\), and let \(K\) be any Markov kernel (i.e., a stochastic mapping) from \(\mathcal{X}\) to another space \(\mathcal{Y}\). Then the data-processing inequality for KL divergence~\citep[Theorem 2.8.1 and Eq. 11.145]{cover} states that
\[
D_{KL}\bigl(pK\,\big\|\,qK\bigr) \leq D_{KL}(p \,\|\, q)
\]

Setting $p=P'(X_1), q=P(X_1)$, $K = P(X_2|X_1)$, marginalization by applying $K$ to $P'(X_1)$ and $P(X_1)$ ~\citep{vanErven_Harremos_2014} yields
\begin{align}
    D_{KL}\Bigl(P'(X_1)K \,\big\|\, P(X_1)K\Bigr) 
    &\leq 
    D_{KL}\Bigl(P'(X_1) \,\big\|\, P(X_1)\Bigr),
    \\
    D_{KL}\Bigl(P'(X_2) \,\big\|\, P(X_2)\Bigr) 
    &\leq
    D_{KL}\Bigl(P'(X_1) \,\big\|\, P(X_1)\Bigr).
\end{align}

\end{proof}

\subsection{Distribution Shift or Loss Shift}
\label{appendix:shift}
The following is based on the assumption that the model has already converged to a distribution before the intervention happens and changes that distribution.

For a cross entropy (CE) loss, the difference in loss shift becomes
\[
\Delta S_{i,j}^{C\!E}\;=\;H(X_i') + D_{KL}(P'_i||P_i)\;-\;H(X_j') - D_{KL}(P'_j||P_j) \, .
\]

The empirical Bias 2 based on CE loss is depicted in Figure \ref{fig:ce_shift_lambda}. Its curve is very similar, but slightly flatter than the one based on Kullback-Leibler divergence (Figure \ref{fig:dist_shift_lambda}). Theorem \ref{theorem:bias2} still applies if \textit{Bias 1: Marginal Distribution Asymmetry} is not present, because then $H(X'_i) = H(X'_j)$.

\begin{figure}[h!]
    \centering
    \includegraphics[width=0.6\linewidth]{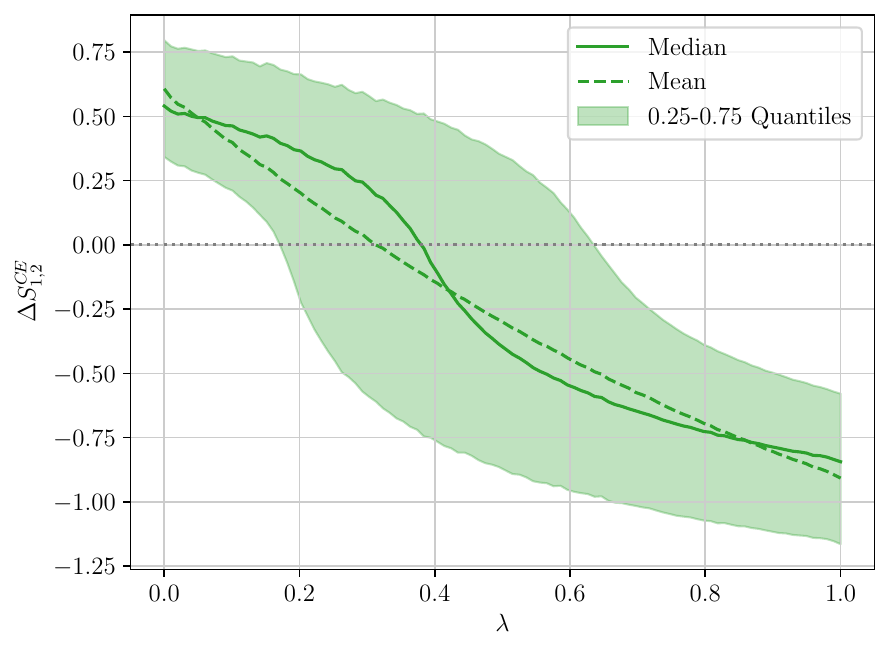}
    \caption{Plot of the empirical difference in cross entropy shift, as opposed to Kullback-Leibler divergence shift presented in Figure \ref{fig:dist_shift_lambda}. Interventions happened in random order.}
    \label{fig:ce_shift_lambda}
\end{figure}

\subsection{Intervention Cases in Practice}
\label{appendix:intervention_cases}
Figure~\ref{fig:lambda_counts} shows the ratios of intervention cases, as explained in Table~\ref{tab:intervention_cases}, with increasing $\lambda$. 
Cases 1 and 2 result in causally related variables (see Figure \ref{fig:intervention_cases}), while cases 3 and 4 do not. Consequently, the proportion of causal information in the data sinks with increasing $\lambda$ -- in fact linearly, because the sum of case 1 and case 2 ratios, and likewise case 3 and case 4 ratios, is linear. This is consistent with the convergence findings of the marginal and conditional model as well as the related work in Section~\ref{sec:related_work_results}. Figure~\ref{fig:scatter_intervention_cases} and \ref{fig:scatter_intervention_cases_conditional} provide an empirical scatter plot of distribution shifts for different intervention cases in the bivariate categorical setup described in Section \ref{sec:methods}.
Figure~\ref{fig:scatter_intervention_cases} shows the marginal distirbution shifts. In case 1, $D_{KL}(P_1'||P_1)$ is always smaller than $D_{KL}(P_2'||P_2)$, consistent with Theorem~\ref{theorem:bias2}. Overall, the observations for both marginal and conditional distribution shift empirically verify the descriptions in Table \ref{tab:intervention_cases}.

\begin{figure}[h!]
    \centering
    \includegraphics[width=0.4\linewidth]{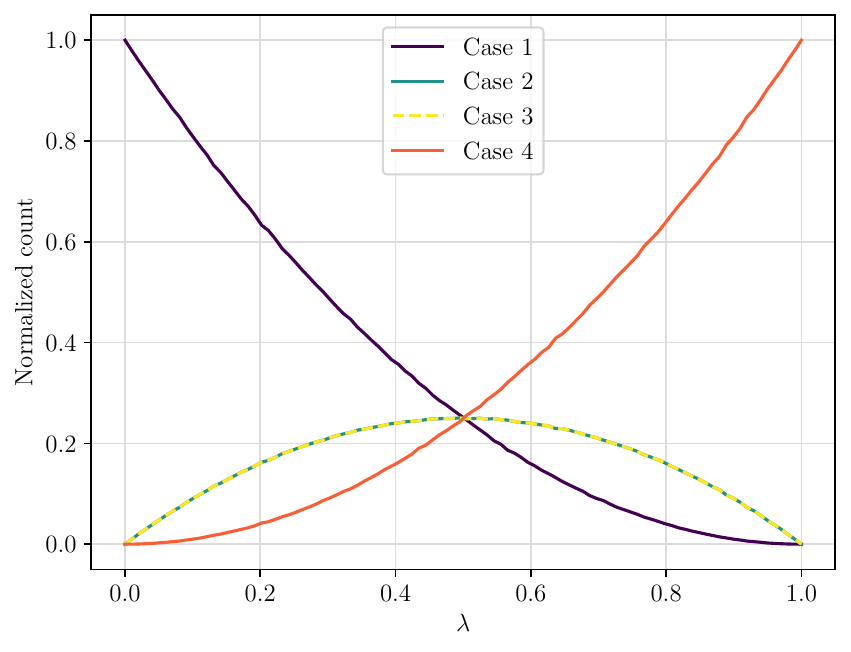}
    \caption{Plot of the ratio in which intervention cases 1, 2, 3 and 4 from Section \ref{sec:bias2} occur depending on $\lambda$, when interventions happen in random order. The ratios sum to 1 for each $\lambda$.}
    \label{fig:lambda_counts}
\end{figure}

\begin{figure}[h!]
    \centering
    \includegraphics[width=\linewidth]{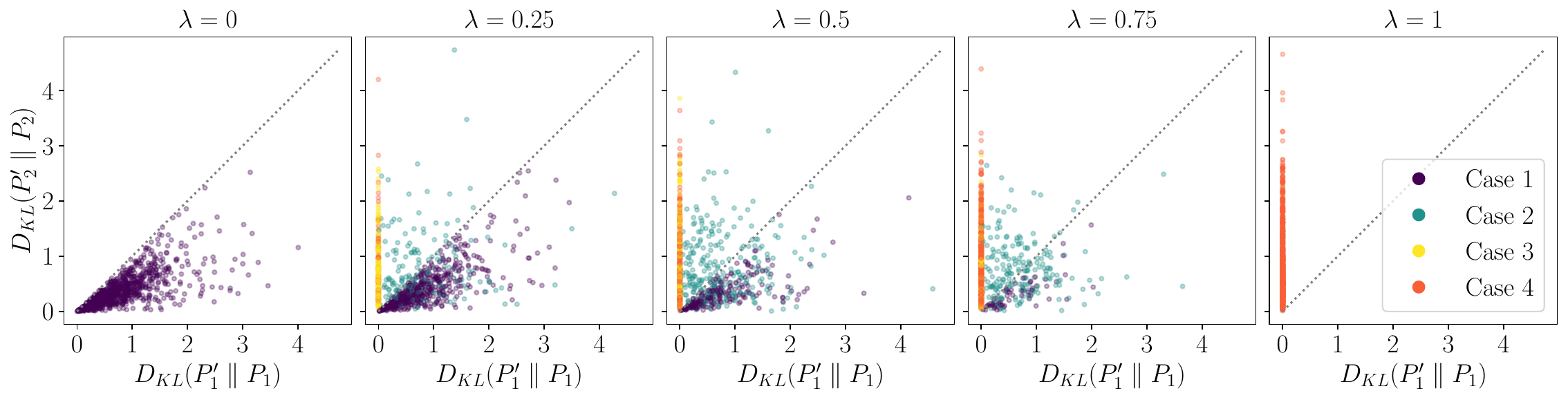}
    \caption{Scatter plot of distribution shifts of $P(X_1)$ and $P(X_2)$ for different $\lambda$ values. Cases 1, 2, 3, 4 of interventions are described in Section \ref{sec:bias2}. Interventions are performed in random order. The diagonal represents the case $\Delta S_{1,2}= 0$. The ratios of points per case in each plot can be inferred from Figure \ref{fig:lambda_counts}.}
    \label{fig:scatter_intervention_cases}
\end{figure}

\begin{figure}[h!]
    \centering
    \includegraphics[width=\linewidth]{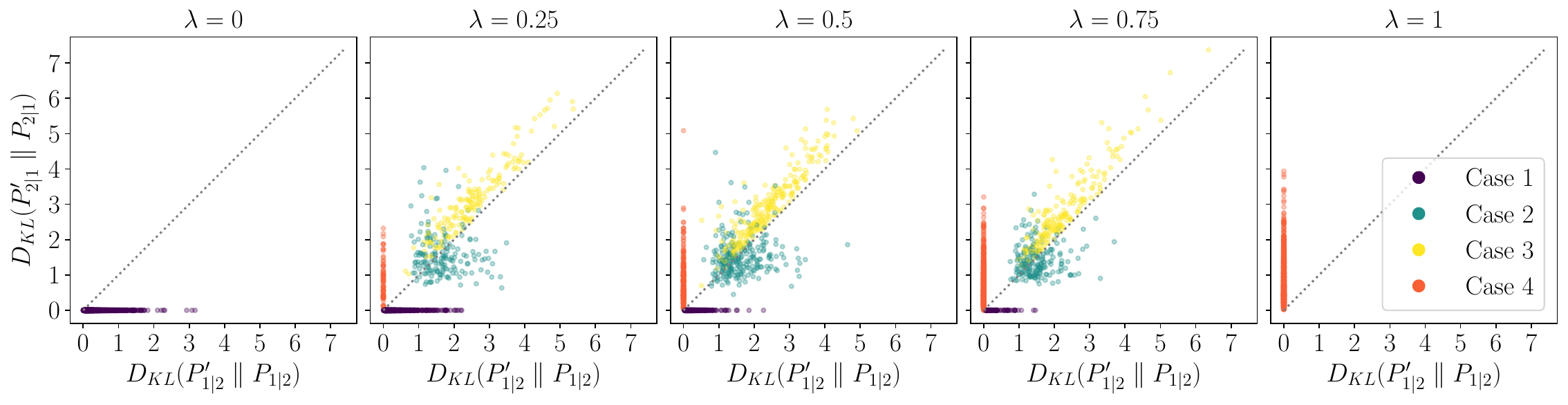}
    \caption{Scatter plot of distribution shifts of $P(X_1|X_2)$ and $P(X_2|X_1)$ for different $\lambda$ values. Cases 1, 2, 3, 4 of interventions are described in Section \ref{sec:bias2}. Interventions are performed in random order. The diagonal represents the case $\Delta S_{1|2,2|1}= 0$. The ratios of points per case in each plot can be inferred from Figure \ref{fig:lambda_counts}.}
    \label{fig:scatter_intervention_cases_conditional}
\end{figure}

\subsection{Model Details}
\label{appendix:model_details}
This section details the hyperparameters and initializations for the MM and CM models. In both models, the parameters of $\mathbf{W}$ are initialized using a Kaiming Uniform distribution, and the parameters $z_1$ and $z_2$ were initialized to 0.5 each. The parameter $\mathbf{i}$ of the MM was initialized to $\mathbf{1}_{K}/K$. This ensures the value is normalized to 1, as the input in the case of the CM, and additionally represents maximal uncertainty about the data distribution. Both models are trained for 300 epochs with 32 batches of size 128 in each epoch, using the Adam optimizer~\citep{adam} with a learning rate of 0.1. The temperature parameter of the Gumbel-Softmax was fixed to $\tau=2$ throughout all experiments.

\subsection{Fast Versus Slow Interventions}
\label{appendix:n_batches}

Figure~\ref{fig:c1_lambda_all} shows how the influence of Bias 2 differs depending on how frequently an intervention is performed. The more batches the model is trained on before another intervention happens, the earlier the model switches from $c_1 > 0.5$ to $c_1<0.5$ with respect to $\lambda$.

\begin{figure}[h!]
    \centering
    \begin{subfigure}{0.5\linewidth}
        \includegraphics[width=\linewidth]{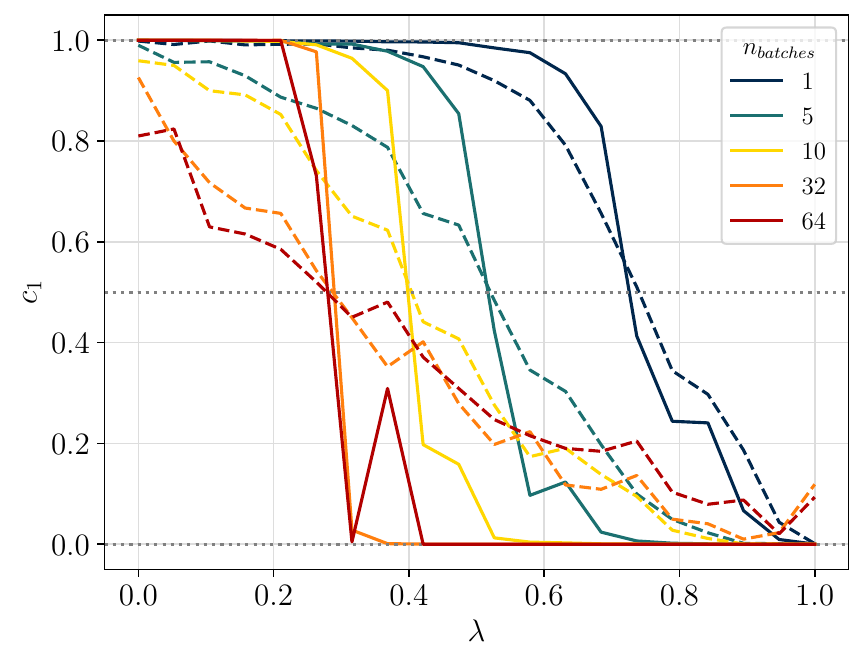}
        \caption{Convergence behavior MM.}
        \label{fig:MM_all}
    \end{subfigure}%
    \hfill
    \begin{subfigure}{0.5\linewidth}
        \includegraphics[width=\linewidth]{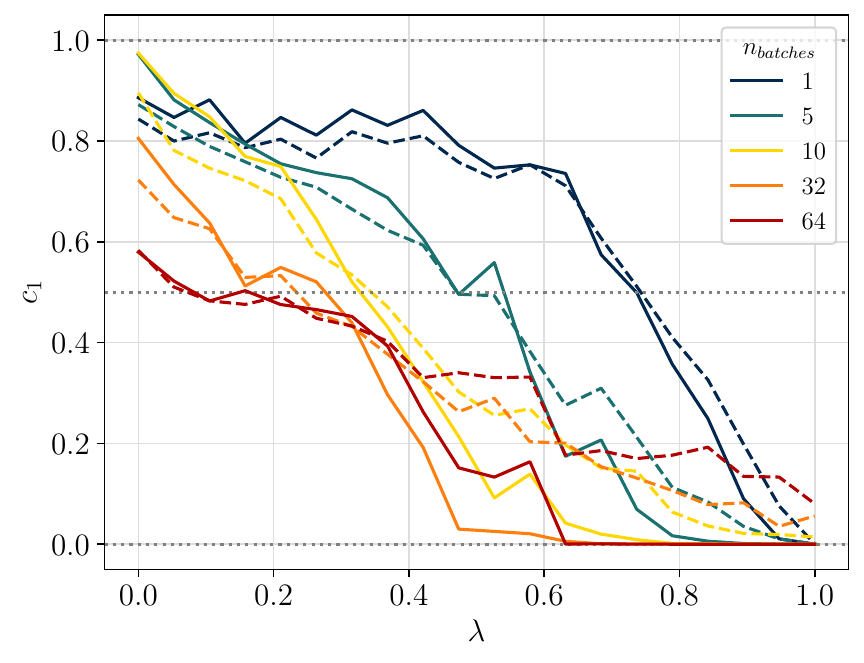}
        \caption{Convergence behavior CM.}
        \label{fig:CM_all}
    \end{subfigure}
    \caption{Value of $c_1$ after 300 epochs, plotted over $\lambda$ for conditional model and marginal model. The statistics are based on 100 runs per value of $\lambda$.}
    \label{fig:c1_lambda_all}
\end{figure}

\end{document}